\renewcommand{\mid}{\,|\,}
\newtheorem{lemma}{Lemma}
\newtheorem{corollary}{Corollary}
\newtheorem{proposition}{Proposition}
\newtheorem{theorem}{Theorem}
\theoremstyle{definition}
\newtheorem{example}{Example}
\theoremstyle{remark}
\newtheorem*{remark}{Remark}
\newcommand{\Var}{\mathrm{Var}}
\DeclareMathOperator*\argmin{arg\,min}
\DeclareMathOperator*\argmax{arg\,max}
\newcommand{\ind}{\mathbbm{1}}
\newcommand{\y}{\widetilde{y}}
\newcommand{\pt}{\mathsf{p}}
\newcommand{\cE}{\mathcal{E}}
\newcommand{\sE}{\mathsf{E}}
\newcommand{\bbE}{\mathbb{E}}
\newcommand{\cF}{\mathcal{F}}
\newcommand{\cI}{\mathcal{I}}
\newcommand{\cM}{\mathcal{M}}
\newcommand{\sd}{\mathsf{d}}
\title{Adaptive Testing for Segmenting Watermarked Texts From Language Models \thanks{Accepted for publication in STAT, October 28, 2025.}}
\author{Xingchi Li$^{1,\dagger}$, Xiaochi Liu$^{2,}\footnote{Equal contribution}$, Guanxun Li$^{2,}\footnote{Corresponding author: guanxun@bnu.edu.cn}$ \smallskip \\
$^1$ Department of Statistics, Texas A\&M University\\
$^2$ Department of Statistics, Beijing Normal University at Zhuhai}
\date{}
\begin{document}

\maketitle

\begin{abstract}
The rapid adoption of large language models (LLMs), such as GPT-4 and Claude 3.5, underscores the need to distinguish LLM-generated text from human-written content to mitigate the spread of misinformation and misuse in education. One promising approach to address this issue is the watermark technique, which embeds subtle statistical signals into LLM-generated text to enable reliable identification. In this paper, we first generalize the likelihood-based LLM detection method of a previous study by introducing a flexible weighted formulation, and further adapt this approach to the inverse transform sampling method. Moving beyond watermark detection, we extend this adaptive detection strategy to tackle the more challenging problem of segmenting a given text into watermarked and non-watermarked substrings. In contrast to the approach in a previous study, which relies on accurate estimation of next-token probabilities that are highly sensitive to prompt estimation, our proposed framework removes the need for precise prompt estimation. Extensive numerical experiments demonstrate that the proposed methodology is both effective and robust in accurately segmenting texts containing a mixture of watermarked and non-watermarked content. \\

\noindent\textbf{Keywords:} binary segmentation, change point detection, randomization test, watermarking
\end{abstract}

\section{Introduction} \label{sec:intro}
The rapid advancement of large language models (LLMs), such as GPT-4 and Claude 3.5, has facilitated their widespread use in text generation across diverse applications such as paper polish, knowledge integration, language translation, and code generation. However, the widespread adoption of LLMs has introduced substantial challenges, including the spread of misinformation, misuse in educational settings and the contamination of training data for subsequent models. Consequently, it has become increasingly important to develop reliable methods for detecting LLM-generated text and distinguishing it from human-written content.

To address these challenges, watermarking has emerged as a promising solution. By embedding subtle statistical signals into LLM-generated text, watermarking allows for reliable identification of LLM-generated content. In this framework, a third-party user sends prompts to a trusted LLM provider, which embeds a watermark into the generated text. The user may subsequently modify the text through substitutions, insertions, deletions, or paraphrasing before publication. Despite such edits, the watermark remains detectable. A central problem in this setting is to determine whether a given text is watermarked and, if so, to accurately detect and localize the watermarked sub-strings. Figure~\ref{fig:flow} illustrates the interactions and roles of the LLM provider, the user, and the detector.

\begin{figure}
    \centering
    \includegraphics[width=0.85\linewidth]{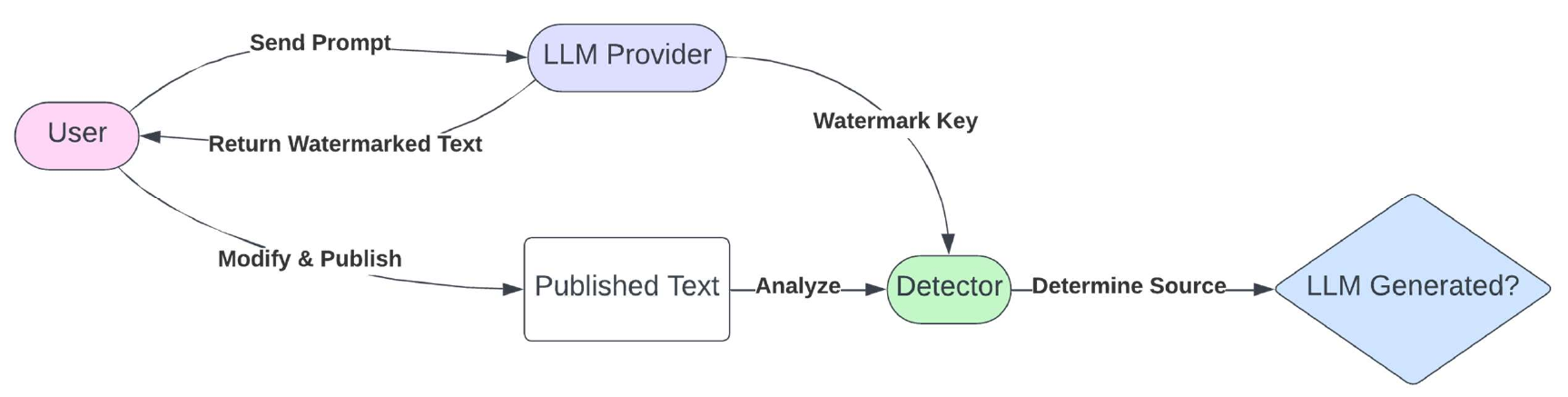}
    \caption{Illustration of the roles of the LLM provider, user and detector.}
    \label{fig:flow}
\end{figure}

Early text watermarking techniques primarily utilized post-processing methods, embedding watermarks after text generation \citep{liu2024survey}. In contrast, more sophisticated approaches integrate watermarking directly into the generation process. A seminal watermarking scheme proposed by \citet{kirchenbauer2023watermark}, known as the `red-green list' method, partitions vocabulary into two distinct categories, enhancing the probability of selecting tokens from the `green' list during next-token prediction. \citet{liu2024adaptive} improved upon this method by adaptively applying watermarking only to token distributions with high entropy. Additional studies exploring related watermarking schemes include \citet{kirchenbauer2023reliability, lee2023wrote, liu2023unforgeable, zhao2023protecting, cai2024towards, huo2024token, nemecek2024topic, zhou2024bileve}. Nonetheless, such watermarking techniques inherently introduce bias by altering next-token prediction distributions, potentially degrading text quality, especially in long or multiple watermarked responses.

In recent years, a series of unbiased watermarking methods have been developed. \citet{Aaronson2023} proposed the Exponential Minimum Sampling (EMS) method, which embeds a detectable watermark signal into generated text by multiplying all tokens by an exponential factor. This method is closely related to the Gumbel trick \citep{papandreou2011perturb} widely used in machine learning. The Inverse Transform Sampling (ITS) watermarking approach introduced by \citet{kuditipudi2023robust} ensures robustness to perturbations while preserving the original text distribution. Further unbiased methods and variations of the Gumbel trick are discussed in \citet{christ2023undetectable, fernandez2023three, hu2023unbiased, wu2023dipmark, wuresilient, fu2024gumbelsoft, zhao2024permute}. Recently, statistical analysis of watermark generation and detection has garnered increased attention. \citet{huang2023towards} were among the first to frame watermark detection as a hypothesis testing problem, analysing minimax Type II errors and identifying optimal testing procedures under the simplified assumption of independent and identically distributed tokens. \citet{li2024statistical} developed a statistical framework for evaluating the asymptotic efficiency of watermark detection, assuming an idealized scenario of fully human-written or fully LLM-generated texts. \citet{li2024robust} extended this framework by considering the detection of watermarked text potentially edited by humans. \citet{xie2025debiasing} proposed a method employing maximal coupling to debias the red-green list approach, establishing asymptotic detection boundaries. Comprehensive reviews of watermarking techniques in artificial intelligence (AI) can be found in \citet{zhao2024sok} and \cite{liu2024survey}.

Recent work by \citet{li2025likelihood} introduced a likelihood-ratio-based method for detecting LLM-generated text, which is particularly effective for watermark detection in short contexts. This approach provides a rigorous analysis of Type I and Type II errors in watermark detection tests. In this paper, we first generalize their framework to a more flexible weighted version and further adapt it to the ITS method. We then address a more challenging task—segmenting published text into watermarked and non-watermarked substrings—a problem that has received limited attention thus far. \citet{kashtan2023information} use log-perplexity as the test statistic and propose a method for identifying sentences most likely written by humans. Building on the framework of \citet{li2024segmenting}, we formulate this task as a change-point detection problem. Specifically, we partition the text into overlapping substrings of a predetermined length and sequentially test each for the presence of a watermark using a randomization-based test. This process yields a sequence of $p$-values, which are subsequently analysed to detect changes in the underlying distribution. Watermarked segments typically produce $p$-values concentrated near zero, while non-watermarked segments yield $p$-values uniformly distributed between 0 and 1. Identifying these change points allows for effective segmentation of the text into watermarked and non-watermarked regions.

The $p$-value sequence is generated using our adaptive method. While the original approach requires accurate estimation of next-token probabilities (NTPs) \citep{li2025likelihood}, which are highly dependent on precise prompt estimation, our framework removes this requirement. In practice, accurately estimating prompts is even more challenging than estimating NTPs, and obtaining precise prompt estimates is typically infeasible in real-world scenarios. To overcome this limitation, we divide the text into small segments, allowing the preceding tokens to naturally serve as prompts and provide accurate NTP estimates for each segment. Since the true prompt only influences the initial segment, we simply use an empty set as the initial prompt, making precise prompt estimation unnecessary.

We validate the effectiveness of our approach through extensive numerical experiments on texts generated by two language models: \verb|Meta-Llama-3-8B| from Meta and \verb|Mistral-7B-v0.1| from Mistral AI, using prompts extracted from Google's C4 dataset. Experimental results demonstrate that our proposed segmentation method more accurately localizes watermark-induced change points than baseline approaches, highlighting the practical benefits of our method.

The remainder of this paper is organized as follows. Section~\ref{sec:prob} introduces the watermark detection problem and the randomization-based test. Section~\ref{sec:ada-stat} presents the adaptive test statistic, and Section~\ref{sec:cpd} describes the segmentation method for detecting watermarked text. Section~\ref{sec:numerical-experiments} reports the numerical results, demonstrating the efficiency of the proposed approach. Finally, Section~\ref{sec:conc} concludes the paper, with proofs and additional results provided in the Supporting Information.

\section{Problem setup}\label{sec:prob}
\subsection{Watermarked Text Generation}
Let $\mathcal{V}$ denote a finite vocabulary set of size $V = |\mathcal{V}|$. For simplicity, we assign a unique index from the set $[V] \coloneqq \{1, 2, \dots, V\}$ to each token in $\mathcal{V}$. Consider an autoregressive LLM $P$, which maps a string $y_{-n_0:t-1} = y_{-n_0} y_{-n_0+1} \cdots y_{t-1} \in \mathcal{V}^{t + n_0}$ to a probability distribution over $\mathcal{V}$. Here, $y_{-n_0:0}$ represents the user-provided prompt. Let $\mu_t(\cdot) \coloneqq p(\cdot \mid y_{-n_0:t-1})$ denote the sampling distribution of the token at time $t$. Suppose a string $y_{1:n}$ is generated by the LLM. We define $p_t \coloneqq \mu_t(y_t)$ as the probability of generating token $y_t$ given the preceding tokens $y_{-n_0:t-1}$ (also known as the NTP).

Let $\xi_{1:t} = \xi_1 \xi_2 \cdots \xi_t$ be the watermark key sequence, where $\xi_i \in \Xi$ with $\Xi$ representing the general key space. When a third-party user provides a prompt, the LLM provider uses a generator to produce text autoregressively via a decoder function $\Gamma$. Specifically, $\Gamma$ maps the current watermark key $\xi_t$ and the distribution $\mu_t$ to a token in $\mathcal{V}$. In this paper, we focus on an unbiased watermarking scheme where the original text distribution is preserved:
\begin{equation}\label{eq:unbiased-wm}
P(\Gamma(\xi_t, p_t) = y) = p_t(y).    
\end{equation}
The watermark text generation algorithm recursively constructs the generated string $y_{1:n}$ as $y_i = \Gamma(\xi_i, p(\cdot \mid y_{-n_0:i-1}))$ for $1 \leq i \leq n$, where $n$ is the number of tokens, and each $\xi_i$ is independently sampled from a distribution $\nu$ over $\Xi$. This process ensures that each token $y_i$ is uniquely determined by $\xi_i$ and the preceding context $y_{-n_0:i-1}$, given $p(\cdot \mid y_{-n_0:i-1})$.

Below, we introduce two popular unbiased watermarking schemes: the EMS technique by \citet{Aaronson2023} and the ITS method proposed by \citet{kuditipudi2023robust}.

\begin{example}[EMS]\label{ex1}
To generate the $i$th token, we first independently sample $\xi_{ik} \sim \text{Unif}[0,1]$ for each $1 \leq k \leq V$, denoting $\xi_i = [\xi_{i1}, \cdots, \xi_{iV}]$. Subsequently, the token $y_i$ is determined as:
\[y_i = \argmax_{k} \frac{\log(\xi_{ik})}{p(k | y_{-n_0:i-1})} = \argmin_{k} \frac{-\log(\xi_{ik})}{p(k | y_{-n_0:i-1})} = \argmin_{k} E_{ik},\]
where $E_{ik} = -\log(\xi_{ik}) / p(k | y_{-n_0:i-1}) \sim \text{Exp}(p(k | y_{-n_0:i-1}))$. Using properties of the exponential distribution, we can verify that $P(y_i = k) = P\bigl(E_{ik} < \min_{j \neq k} E_{ij}\bigr) = p(k | y_{-n_0:i-1})$, which confirms the unbiased nature of this generation scheme.

For a string $\y_{1:n}$ (of the same length as the key) that may be watermarked, \citet{Aaronson2023} propose measuring the dependence between the string $\y_{1:n}$ and the key sequence $\xi_{1:n}$ using the metric:
\begin{align}\label{eq-stat1}
\phi(\xi_{1:n},\y_{1:n})=\frac{1}{n}\sum_{i=1}^n\log(\xi_{i, \y_i}).
\end{align}
If $\y_i$ is generated using the scheme above with the key $\xi_i$, then $\xi_{i, \y_i}$ tends to be larger than the other components of $\xi_i$. Thus, a higher value of $\phi$ suggests that $\y_{1:n}$ was likely generated with watermarking.
\end{example}

\begin{example}[ITS]\label{ex2}
Let $\mu_i(k) = p(k | y_{-n_0:i-1})$ for $1 \leq k \leq V$ and $1 \leq i \leq n$. To generate the $i$th token, we first generate a permutation $\pi_i$ on $[V]$ and sample $u_i \sim \text{Unif}[0,1]$, thereby forming the key $\xi_i = (\pi_i, u_i)$. The decoder function $\Gamma$ is defined as 
\[\Gamma((\pi_i, u_i), \mu_i) = \pi_i^{-1}\bigl(\min \left\{ \pi_i(l) \; \middle| \; \mu_i(j : \pi_i(j) \leq \pi_i(l)) \geq u_i \right\}\bigr).\]
We note that $\Gamma((\pi_i, u_i), \mu_i) = k$ if $\min\{\pi_i(l) : \mu_i(j : \pi_i(j) \leq \pi_i(l)) \geq u_i\} = \pi_i(k)$, which implies that $\mu_i(j : \pi_i(j) \leq \pi_i(k)) \geq u_i > \mu_i(j : \pi_i(j) < \pi_i(k))$. As the length of this interval is $\mu_i(k)$, it follows that $P(\Gamma((\pi_i, u_i), \mu_i) = k) = \mu_i(k)$.

For a string $\y_{1:n}$ (of the same length as the key) that may be watermarked, \citet{kuditipudi2023robust} quantify the dependence between $\y_{1:n}$ and $\xi_{1:n}$ using the metric:
\begin{equation}\label{eq-stat2}
\phi(\xi_{1:n}, \y_{1:n}) = \frac{1}{n} \sum_{i=1}^n (u_i - 1 / 2) \left( \frac{\pi_i(\y_i) - 1}{V - 1} - \frac{1}{2} \right).
\end{equation}
If $\y_i$ is generated using the ITS scheme with the key $\xi_i = (\pi_i, u_i)$, then $u_i$ and $\pi_i(\y_i)$ are positively correlated. Consequently, a large value of $\phi$ suggests that $\y_{1:n}$ is likely watermarked.
\end{example}

\subsection{Watermarked Text Detection}\label{sec:dect}
In this section, we address the detection problem of determining whether a given text is watermarked. In practice, the published text $\y_{1:m}$ may differ from the original text $y_{1:n}$ generated by the LLM using the key $\xi_{1:n}$. Let $\mathcal{T}$ be a transformation function such that $\y_{1:m} = \mathcal{T}(y_{1:n})$. This transformation may involve substitutions, insertions, deletions, paraphrasing, or other edits. Detecting watermarks in such transformed text is challenging because the transformation $\mathcal{T}$ can obscure the watermarking signal.

Consider a published string $\y_{1:m}$ generated by a third-party user and a key sequence $\xi_{1:n}$ provided to the detector. The detector performs a hypothesis test:
\[H_{0}: \text{$\y_{1:m}$ is not watermarked} \quad \text{vs.} \quad H_{a}: \text{$\y_{1:m}$ is watermarked},\]
by computing a $p$-value based on the test statistic $\phi(\xi_{1:n}, \y_{1:m})$.

The test statistic $\phi$ measures the dependence between $\y_{1:m}$ and $\xi_{1:n}$. We assume that larger values of $\phi$ provide stronger evidence against the null hypothesis. To compute the $p$-value, we use a randomization test as follows. Generate $T$ independent key sequences $\xi^{(t)} = \xi^{(t)}_1 \cdots \xi^{(t)}_n$, where $\xi_{i}^{(t)} \sim \nu$ for $1 \leq t \leq T$ and $1 \leq i \leq n$, with each $\xi_{i}^{(t)}$ independent of $\y_{1:m}$. The randomization-based $p$-value is then defined as:
\[\pt_T = \frac{1}{T+1} \left( 1 + \sum_{t=1}^{T} \ind\left\{ \phi(\xi_{1:n}, \y_{1:m}) \leq \phi(\xi_{1:n}^{(t)}, \y_{1:m}) \right\} \right).\]
Given a pre-specified level $\alpha \in (0,1)$, we reject the null hypothesis $H_0$ if $\pt_T \leq \alpha$. 

To analyse the Type I and Type II errors of this testing procedure, let $\cF_m = [y_{-n_0:0}, \y_{1:m}]$. Given $\phi(\xi_{1:n}, \y_{1:m})$, define $\sd_{\xi} \coloneqq \sd(\phi(\xi_{1:n}, \y_{1:m}) \mid \cF_m)$ and $\sE_{\xi} \coloneqq \bbE[\phi(\xi_{1:n}, \y_{1:m}) \mid \cF_m]$. Similarly, let $\sd_{\xi'}$ and $\sE_{\xi'}$ be defined analogously with $\xi_{1:n}$ replaced by $\xi'_{1:n}$, where $\xi'_{1:n}$ is a key sequence generated independently of $\y_{1:m}$ in the same manner as $\xi_{1:n}$.

The following proposition proved by \citet{li2025likelihood} shows that the proposed randomization test controls both Type I and Type II errors under suitable conditions.
\begin{proposition}\label{thm:error-control}(Theorem 1 of \citealt{li2025likelihood})
For the randomization test, the following holds:
\begin{itemize}
    \item[(i)] Under the null hypothesis,
    \[P(\pt_T \leq \alpha) = \lfloor (T+1)\alpha \rfloor / (T+1) \leq \alpha,\]
    where $\lfloor a \rfloor$ denotes the greatest integer less than or equal to $a$.
    \item[(ii)] For any $\epsilon > 0$, if $T > 2/\epsilon - 1$, $\sE_{\xi'} = o(\sE_{\xi})$, $\sd_{\xi'} = o(\sE_{\xi})$, and $\sd_{\xi} = o(\sE_{\xi})$, then
    \begin{equation}\label{eq-power}
    P(\pt_T \leq \alpha \mid \cF_n) \geq 1 - C_1 \exp(-2T\epsilon^2) + o(1),
    \end{equation}
    as $n \to \infty$, where $C_1 > 0$.
\end{itemize}
\end{proposition}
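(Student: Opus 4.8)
\emph{Part (i) (size control).} The plan is to reduce the claim to the exchangeability of the $T+1$ test statistics. Under $H_0$ the published string $\y_{1:m}$ is produced with no reference to any watermark key, so it is independent of the whole collection $\xi_{1:n},\xi_{1:n}^{(1)},\dots,\xi_{1:n}^{(T)}$, and these $T+1$ key sequences are i.i.d.\ (each coordinate $\sim\nu$, independently). Hence, conditionally on $\cF_m$, the scalars $\phi(\xi_{1:n},\y_{1:m}),\phi(\xi_{1:n}^{(1)},\y_{1:m}),\dots,\phi(\xi_{1:n}^{(T)},\y_{1:m})$ are exchangeable. Consequently $R:=(T+1)\pt_T=1+\sum_{t=1}^{T}\ind\{\phi(\xi_{1:n},\y_{1:m})\le\phi(\xi_{1:n}^{(t)},\y_{1:m})\}$ — one plus the number of randomized statistics weakly exceeding the observed one — is, assuming the key law is atomless so ties have probability zero, uniform on $\{1,\dots,T+1\}$ given $\cF_m$. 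Therefore $P(\pt_T\le\alpha\mid\cF_m)=P(R\le(T+1)\alpha\mid\cF_m)=\lfloor(T+1)\alpha\rfloor/(T+1)$, which does not depend on $\cF_m$; averaging over $\cF_m$ gives the stated identity, and $\lfloor(T+1)\alpha\rfloor/(T+1)\le\alpha$ is immediate. (If ties are possible the ``$\le$'' inside the indicator only inflates $\pt_T$, so the equality weakens to ``$\le$'', which still delivers size control.)

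\emph{Part (ii) (power): reduction and concentration of the statistics.} Work conditionally on $\cF_n$ and abbreviate $\Phi:=\phi(\xi_{1:n},\y_{1:m})$ and $\Phi_t:=\phi(\xi_{1:n}^{(t)},\y_{1:m})$. The three $o(\cdot)$ hypotheses say the signal mean $\sE_\xi$ dominates its own dispersion $\sd_\xi$ as well as the null mean $\sE_{\xi'}$ and dispersion $\sd_{\xi'}$ of a randomized statistic. First I would show the observed statistic is large: by Chebyshev's inequality conditional on $\cF_n$, $P(\Phi<\sE_\xi/2\mid\cF_n)\le P(|\Phi-\sE_\xi|\ge\sE_\xi/2\mid\cF_n)\le 4\sd_\xi^2/\sE_\xi^2=o(1)$; write $\cG_n:=\{\Phi\ge\sE_\xi/2\}$ for the complementary high-probability event. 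Next, each randomized statistic is small: since $\xi_{1:n}^{(t)}$ is drawn independently of $\y_{1:m}$, the conditional law of $\Phi_t$ given $\cF_n$ is the null law, with mean $\sE_{\xi'}$ and standard deviation $\sd_{\xi'}$; because $\sE_{\xi'}=o(\sE_\xi)$, for all large $n$ one has $\sE_\xi/2-\sE_{\xi'}\ge\sE_\xi/4$, so Chebyshev gives $\delta_n:=P(\Phi_t\ge\sE_\xi/2\mid\cF_n)\le 16\sd_{\xi'}^2/\sE_\xi^2=o(1)$, uniformly in $t$.

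\emph{Part (ii), continued: the count and final assembly.} Conditionally on $\cF_n$ and on $\Phi$, the variables $\ind\{\Phi\le\Phi_t\}$, $t=1,\dots,T$, are i.i.d.\ Bernoulli with common mean $r(\Phi):=P(\Phi_1\ge\Phi\mid\cF_n,\Phi)$, and $r(\Phi)\le\delta_n$ on $\cG_n$. Hoeffding's inequality then gives $P\bigl(\sum_{t=1}^{T}\ind\{\Phi\le\Phi_t\}>T(r(\Phi)+\epsilon)\,\big|\,\cF_n,\Phi\bigr)\le\exp(-2T\epsilon^2)$ for every value of $\Phi$. On the intersection of $\cG_n$ with the complement of this event, $\sum_{t}\ind\{\Phi\le\Phi_t\}\le T(\delta_n+\epsilon)$, hence $\pt_T\le(1+T\delta_n+T\epsilon)/(T+1)$; because $\delta_n\to 0$ and $T>2/\epsilon-1$ (which keeps $1/(T+1)$ and the $\epsilon$-term below $\alpha$ with room to spare), this bound is $\le\alpha$ for all large $n$. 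Taking conditional expectation over $\Phi$ removes the $\Phi$-dependence in the Hoeffding bound, so $P(\pt_T>\alpha\mid\cF_n)\le P(\cG_n^c\mid\cF_n)+\exp(-2T\epsilon^2)\le 4\sd_\xi^2/\sE_\xi^2+\exp(-2T\epsilon^2)$, i.e. $P(\pt_T\le\alpha\mid\cF_n)\ge 1-\exp(-2T\epsilon^2)-4\sd_\xi^2/\sE_\xi^2=1-C_1\exp(-2T\epsilon^2)+o(1)$, which is the claim (indeed $C_1=1$ works for this argument).

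\emph{Principal difficulty.} The conceptual substance is the pair of Chebyshev concentration bounds together with the structural observation that, conditionally on the data and on $\xi_{1:n}$, the randomized statistics $\Phi_t$ are i.i.d.\ draws from the null law whose first two moments are exactly $\sE_{\xi'},\sd_{\xi'}$; making the conditioning precise — in particular, that the $o(\cdot)$ hypotheses, which concern the random quantities $\sE_\xi,\sd_\xi,\sE_{\xi'},\sd_{\xi'}$ given $\cF_n$, may be invoked inside the conditional probabilities — is the delicate point. The remainder is bookkeeping: matching the crude Chebyshev/Hoeffding estimates to the sharp hypothesis $T>2/\epsilon-1$ (verifying $(1+T\delta_n+T\epsilon)/(T+1)\le\alpha$ eventually) may require a more careful choice of cutoff in the first step — say $\sE_\xi(1-\kappa)$ with $\kappa$ small, and bounding the count by $\sum_t\ind\{\Phi_t\ge\sE_\xi(1-\kappa)\}$ — but introduces no new ideas.
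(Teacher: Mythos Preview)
Your Part (i) is essentially identical to the paper's argument: exchangeability under $H_0$, hence uniform rank, hence the exact size formula.

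For Part (ii) your route differs from the paper's. The paper works with the \emph{critical value}: it lets $F$ be the null conditional law of a randomized statistic, $F_T$ the empirical c.d.f.\ of $\{\varphi^{(t)}\}_{t=0}^{T}$, and $q_{1-\alpha,T}$ the empirical $(1-\alpha)$-quantile; then the Dvoretzky--Kiefer--Wolfowitz inequality gives, with probability at least $1-C_1\exp(-2T\epsilon^2)$, that $F(q_{1-\alpha,T})\le 1-\alpha+3\epsilon$ (this is exactly where $T>2/\epsilon-1$ enters), so $q_{1-\alpha,T}\le F^{-1}(1-\alpha+3\epsilon)=\sE_{\xi'}+O(\sd_{\xi'})$, and a Chebyshev bound on $\Phi$ finishes. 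You instead bound the \emph{exceedance count} directly: Chebyshev pins $\Phi$ above $\sE_\xi/2$ and each $\Phi_t$ below it, then Hoeffding on the i.i.d.\ Bernoulli indicators $\ind\{\Phi\le\Phi_t\}$ (conditionally on $\cF_n,\Phi$) yields the same $\exp(-2T\epsilon^2)$ rate. Your argument is slightly more elementary --- Hoeffding for Bernoulli sums rather than the uniform DKW bound --- and it even gives $C_1=1$ rather than the DKW constant $C_1=2$. The paper's quantile viewpoint, on the other hand, is closer to how the maximum-statistic extensions (Proposition~\ref{prop-max}, Theorem~\ref{thm:its}) are later organized. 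Both proofs carry the same implicit restriction that $\epsilon$ be small relative to $\alpha$ (in the paper so that $F^{-1}(1-\alpha+3\epsilon)$ is a genuine quantile, in yours so that $(1+T\delta_n+T\epsilon)/(T+1)\le\alpha$); you correctly flag this in your final paragraph.
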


By applying Proposition \ref{thm:error-control} to Examples \ref{ex1} and \ref{ex2} with $\phi$ defined in \eqref{eq-stat1} and \eqref{eq-stat2} and $\y_{1:m} = y_{1:n}$ for $n=m$, we identify conditions under which the power of the randomization test for these two watermarking schemes converges to 1 as $T \to \infty$.

\begin{corollary}\label{cor:thm-exp}
If $\y_{1:n} = y_{1:n}$ and
\begin{equation}\label{eq:cond-examp}
\cE \coloneqq \frac{1}{\sqrt{n}} \sum_{i=1}^n \left(1 - p(y_i \mid y_{-n_0:i-1})\right) \to \infty,
\end{equation}
then the conditions of Theorem \ref{thm:error-control} are satisfied for Examples \ref{ex1} and \ref{ex2}.
\end{corollary}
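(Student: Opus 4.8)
The plan is to verify the three asymptotic conditions in part~(ii) of Proposition~\ref{thm:error-control} --- namely $\sE_{\xi'} = o(\sE_\xi)$, $\sd_{\xi'} = o(\sE_\xi)$, and $\sd_\xi = o(\sE_\xi)$ --- for each of the two statistics; the requirement $T > 2/\epsilon - 1$ is then a free choice of $(\epsilon,T)$. Two structural observations do the organizing work. First, the randomization $p$-value $\pt_T$ is unchanged if $\phi(\cdot, y_{1:n})$ is replaced by $\phi(\cdot, y_{1:n}) - c$ for any $\cF_n$-measurable $c$, since such a shift is common to $\phi(\xi_{1:n}, y_{1:n})$ and to every $\phi(\xi_{1:n}^{(t)}, y_{1:n})$; I will use this to recenter the EMS statistic \eqref{eq-stat1} by its fresh-key mean (the constant $-1$), so that $\sE_{\xi'} = 0$, whereas for ITS one checks directly that $\sE_{\xi'} = 0$ because $u_i$ and $\pi_i$ are independent under a fresh key. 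Second, conditionally on $\cF_n = [y_{-n_0:0}, y_{1:n}]$ the posterior of $\xi_{1:n}$ factorizes across $i$: since $y_i = \Gamma(\xi_i, p(\cdot\mid y_{-n_0:i-1}))$ depends only on $\xi_i$ and the observed prefix, the conditional law of $\xi_i$ given $\cF_n$ depends on $\cF_n$ only through $(y_i, \mu_i)$ with $\mu_i = p(\cdot\mid y_{-n_0:i-1})$. Hence $\phi(\xi_{1:n}, y_{1:n})$ and $\phi(\xi_{1:n}', y_{1:n})$ are, given $\cF_n$, normalized sums of $n$ independent summands (bounded, or sub-exponential for EMS), and everything reduces to per-coordinate moments.

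For EMS I would compute the conditional law of $\xi_{i, y_i}$ given $\cF_n$. Writing $p_i = p(y_i \mid y_{-n_0:i-1})$ and $E_{ik} = -\log\xi_{ik}/p(k\mid y_{-n_0:i-1}) \sim \mathrm{Exp}(p(k\mid y_{-n_0:i-1}))$, we have $y_i = \argmin_k E_{ik}$; using that for independent exponentials the minimum is $\mathrm{Exp}(1)$-distributed (as $\sum_k p(k\mid y_{-n_0:i-1}) = 1$) and independent of the argmin, one finds that given $\cF_n$ the variable $\log\xi_{i,y_i}$ has the law of $-p_i M_i$ with $M_i \sim \mathrm{Exp}(1)$. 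Thus $\bbE[\log\xi_{i,y_i}\mid\cF_n] = -p_i$ and $\Var(\log\xi_{i,y_i}\mid\cF_n) = p_i^2$, so the recentered statistic $\frac1n\sum_{i=1}^n(1+\log\xi_{i,y_i})$ satisfies $\sE_\xi = \frac1n\sum_{i=1}^n(1-p_i) = \cE/\sqrt n$ and $\sd_\xi = \frac1n\bigl(\sum_{i=1}^n p_i^2\bigr)^{1/2}\le n^{-1/2}$; under a fresh key $\log\xi'_{i,y_i}\sim\log\mathrm{Unif}[0,1]$ are i.i.d., giving $\sE_{\xi'}=0$ and $\sd_{\xi'} = n^{-1/2}$. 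Then $\sE_{\xi'}/\sE_\xi = 0$ and $\sd_\xi/\sE_\xi, \sd_{\xi'}/\sE_\xi \le 1/\cE \to 0$, which are exactly the required conditions.

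For ITS I would first pin down the conditional law of $(u_i, \pi_i(y_i))$ given $\cF_n$. Representing $\pi_i$ via i.i.d.\ $\mathrm{Unif}[0,1]$ variables $W_{i1},\dots,W_{iV}$ (with $\pi_i$ the permutation sorting tokens by their $W$-values), one checks that given $\cF_n$ with $y_i = k^*$ the $W_{ij}$ remain i.i.d.\ uniform --- the event $\{y_i = k^*\}$ has $u_i$-probability $\mu_i(k^*)$ for every value of the $W_{ij}$ --- while $u_i$ is conditionally uniform on $[A_i, A_i + \mu_i(k^*)]$, where $A_i = \sum_{j: W_{ij} < W_{i,k^*}}\mu_i(j)$ and $\pi_i(k^*) = 1 + |\{j : W_{ij} < W_{i,k^*}\}|$. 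Integrating out $u_i$, the $i$-th summand of \eqref{eq-stat2} has conditional mean $\Cov_{W_i}\!\bigl(A_i,\, (\pi_i(k^*)-1)/(V-1)\bigr)$, and a short variance-decomposition (conditioning on $W_{i,k^*}$) evaluates this covariance as $\frac{V+1}{12(V-1)}(1-p_i)$; hence $\sE_\xi = \frac{V+1}{12(V-1)}\,\cE/\sqrt n$. Since each summand of \eqref{eq-stat2} lies in $[-1/4, 1/4]$, both $\sd_\xi$ and $\sd_{\xi'}$ are at most $1/(4\sqrt n)$, and $\sE_{\xi'} = 0$ as noted; with $V$ fixed and $\cE\to\infty$, all three ratios tend to $0$.

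The main obstacle is the ITS per-coordinate computation: establishing the joint conditional law of $(u_i, \pi_i(y_i))$ given $\cF_n$, reducing the conditional mean of the summand to a covariance between two explicit functions of the same uniform order statistics, and evaluating that covariance. By contrast, the EMS moments, the conditional-independence bookkeeping, and the final ratio estimates are routine. A minor point is that the $o(\cdot)$ relations are asserted along the realized filtrations $\cF_n$ for which $\cE\to\infty$; since $\sE_\xi$, $\sd_\xi$, $\sE_{\xi'}$, $\sd_{\xi'}$ all have closed forms in the next-token probabilities $\{p_i\}$, this is immediate once those forms are in hand.
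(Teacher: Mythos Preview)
Your proposal is correct and follows essentially the same strategy as the paper: recenter the EMS statistic so that the fresh-key mean vanishes, compute the per-coordinate conditional moments of $\log\xi_{i,y_i}$ (via $-\log\xi_{i,y_i}\mid\cF_n\sim\mathrm{Exp}(1/p_i)$ under the true key and $\mathrm{Exp}(1)$ under a fresh key), and control $\sd_\xi,\sd_{\xi'}$ for ITS by the crude bound $|h_i|\le 1/4$. The only notable difference is in the ITS per-coordinate mean: the paper conditions on $\pi_i(y_i)$ and reduces $\bbE[h_i\mid\cF_n]$ to $(1-p_i)\,\Var\!\bigl((\pi_i(y_i)-1)/(V-1)\bigr)$ with the discrete-uniform variance left as an unspecified constant, whereas you parametrize $\pi_i$ via i.i.d.\ uniforms $W_{ij}$ and evaluate the resulting covariance by the law of total covariance conditioning on $W_{i,y_i}$, obtaining the explicit constant $\tfrac{V+1}{12(V-1)}$. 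The two computations agree, and your route has the minor advantage of making the constant explicit.
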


The quantity $\cE$ is referred to as the watermark potential by \citet{kuditipudi2023robust}, and it characterizes the entropy of the LLM. This measure indicates the detector’s ability to distinguish watermarked text from unwatermarked text. For instance, if the language model is deterministic—producing the same output for a given input without randomness—then $\cE = 0$, as $p(y_i \mid y_{-n_0:i-1}) = 1$ for all $i$.  In this case, it becomes impossible to differentiate watermarked text from unwatermarked text.

When the published text $\y_{1:m}$ is modified, not all tokens will maintain correlation with the key sequence. Instead, certain substrings are expected to exhibit correlation under $H_a$. To detect this, a scanning method is employed that compares every segment of length $B$ in $\y_{1:m}$ to each segment of length $B$ in $\xi_{1:n}$. Define the dependence measure $\phi(\xi_{a:a+B-1}, \y_{b:b+B-1})$ according to the chosen watermarking method. We consider the maximum test statistic defined as
\begin{equation}\label{eq:max-stat}
\cM(\xi_{1:n}, \y_{1:m}) = \max_{1 \leq a \leq n-B+1} \max_{1 \leq b \leq m-B+1} \phi(\xi_{a:a+B-1}, \y_{b:b+B-1}).    
\end{equation}

\begin{proposition}\label{prop-max}
Consider the maximum statistic defined above, where the dependence measure is
\[\phi(\xi_{a:a+B-1}, \y_{b:b+B-1}) = \frac{1}{B} \sum_{i=0}^{B-1} h_i(\xi_{a+i}, \y_{b+i}),\]
with $h_i$s are independent given $[\y_{1:m}, y_{-n_0:n}]$. If
\[C_{N,B}^{-1} \max_{a,b} \mathbb{E}\left[ \mathcal{M}(\xi_{a:a+B-1}, \y_{b:b+B-1}) \mid \y_{1:m}, y_{-n_0:n} \right] \to \infty,\]
where $N = \max\{n, m\}$ and $C_{N,B} = \log(N) / B$ in Example~\ref{ex1}, and $C_{N,B} = \sqrt{\log(N) / B}$ in Example~\ref{ex2}, then \eqref{eq-power} holds.
\end{proposition}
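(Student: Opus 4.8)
The plan is to reduce the power statement for the maximum statistic $\cM$ to the single-window power bound already established in Proposition~\ref{thm:error-control}(ii), paying the price of a union bound over the $O(NM) = O(N^2)$ pairs $(a,b)$ that enter the maximum. Concretely, suppose under $H_a$ there is a ``good'' pair $(a^\star,b^\star)$ for which the window $\y_{b^\star:b^\star+B-1}$ is generated with key $\xi_{a^\star:a^\star+B-1}$, so that $\sE_\xi \coloneqq \bbE[\phi(\xi_{a^\star:a^\star+B-1},\y_{b^\star:b^\star+B-1}) \mid \cF]$ is large in the sense of the hypothesis. For the randomization $p$-value based on $\cM$, the event $\{\pt_T \le \alpha\}$ fails only if at least $\lceil(T+1)\alpha\rceil$ of the resampled statistics $\cM(\xi^{(t)},\y_{1:m})$ exceed the observed $\cM(\xi,\y_{1:m}) \ge \phi(\xi_{a^\star:a^\star+B-1},\y_{b^\star:b^\star+B-1})$. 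So it suffices to show: (a) the observed statistic concentrates above a threshold like $\tfrac12\sE_\xi$ (lower deviation on a single window, using $\sd_\xi = o(\sE_\xi)$ via the bound in the hypothesis on $\max_{a,b}\bbE[\cM \mid \cdot]$ together with $h_i$ independence); and (b) each resampled $\cM(\xi^{(t)},\y_{1:m})$ stays below that threshold with high probability --- this is where the union bound over all $(a,b)$ pairs is spent.

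For step (b), the key point is that under resampling the keys are independent of $\y_{1:m}$, so each $\phi(\xi^{(t)}_{a:a+B-1}, \y_{b:b+B-1})$ is an average of $B$ independent bounded (or sub-exponential) terms with mean $\sE_{\xi'} = o(\sE_\xi)$. I would apply a Hoeffding-type tail bound in Example~\ref{ex2} (the summands $(u_i-\tfrac12)\bigl(\tfrac{\pi_i(\y_i)-1}{V-1}-\tfrac12\bigr)$ are bounded), giving $P(\phi > t) \le \exp(-cBt^2)$, and a Bernstein/sub-exponential bound in Example~\ref{ex1} (the summands $\log\xi_{i,\y_i}$ are sub-exponential), giving $P(\phi > t) \le \exp(-cBt)$. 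Taking a union bound over $O(N^2)$ pairs, the resampled maximum $\cM(\xi^{(t)},\y_{1:m})$ exceeds a level $t$ with probability at most $N^2 e^{-cBt^2}$ (resp. $N^2 e^{-cBt}$), which is $o(1)$ precisely when $t \gg \sqrt{\log N / B}$ (resp. $t \gg \log N / B$) --- this is the origin of the normalizer $C_{N,B}$ in the two examples. The hypothesis that $C_{N,B}^{-1}\max_{a,b}\bbE[\cM \mid \cdot] \to \infty$ then guarantees $\tfrac12\sE_\xi$ can be chosen as such a threshold $t$, so each resampled statistic is below it with probability $1-o(1)$.

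Putting (a) and (b) together: conditionally, the number of ``exceedances'' $\sum_{t=1}^T \ind\{\cM(\xi,\y) \le \cM(\xi^{(t)},\y)\}$ is stochastically dominated by a $\mathrm{Binomial}(T, q)$ with $q \to 0$, so by the same Hoeffding argument used in the proof of Proposition~\ref{thm:error-control}(ii) --- choosing $T > 2/\epsilon - 1$ and $\epsilon$ small --- the probability that this count reaches $\lceil(T+1)\alpha\rceil$ is at most $C_1\exp(-2T\epsilon^2) + o(1)$, which is exactly \eqref{eq-power}. The main obstacle, and the place where care is genuinely needed, is step (b): one must verify that the per-window deviation inequalities are uniform in $(a,b)$ and that the relevant summands really are independent given $[\y_{1:m}, y_{-n_0:n}]$ with the stated tail behavior (bounded in Example~\ref{ex2}, sub-exponential in Example~\ref{ex1}), since it is the interplay between the $e^{-cBt^2}$ vs.\ $e^{-cBt}$ rates and the $\log N$ cost of the union bound that produces the two different forms of $C_{N,B}$. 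A secondary subtlety is making precise the meaning of ``a good pair $(a^\star,b^\star)$ exists'' --- i.e., translating the hypothesis on $\max_{a,b}\bbE[\cM \mid \cdot]$ into the existence of a single window on which the conditional mean is of that order, which is immediate since the max over $(a,b)$ of the conditional expectation is attained at some pair.
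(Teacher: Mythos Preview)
Your proposal is correct and follows essentially the same route as the paper: per-window concentration via Hoeffding (bounded summands, Example~\ref{ex2}) or Bernstein (sub-exponential summands, Example~\ref{ex1}), a union bound over the $O(N^2)$ pairs $(a,b)$ to control the resampled maximum, which pins down the two forms of $C_{N,B}$, and then the Hoeffding/binomial exceedance argument from the proof of Proposition~\ref{thm:error-control}(ii) to finish. The only cosmetic difference is that the paper applies the same uniform deviation bound to the observed statistic as well (rather than isolating a single good window for step~(a)), but your version is equally valid since $\cM \ge \phi$ at $(a^\star,b^\star)$.
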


\section{Adaptive Test Statistics for Watermark Detection}\label{sec:ada-stat}
Recall that the NTP is defined as $p_i \coloneqq p(y_i \mid y_{-n_0:i-1})$. It is evident that tokens with smaller $p_i$ contribute more substantially to the quantity $\cE$ in \eqref{eq:cond-examp} than those with larger $p_i$. In other words, tokens with lower $p_i$ play a more critical role in distinguishing watermarked text from unwatermarked text. Motivated by this observation, \citet{li2025likelihood} proposed assigning greater weights to tokens with smaller $p_i$ in the test statistics. In this section, we extend this idea to the ITS method.

\subsection{Adaptive Test Statistics for the EMS Method}
\citet{li2025likelihood} introduced the following general class of test statistics for the EMS method:
\begin{equation}\label{eq:gene-stat1}
\phi(\xi_{1:n}, \y_{1:n};w_{1:n}) = \frac{1}{n}\sum_{i=1}^n w_i \log(\xi_{i, \y_i}),
\end{equation}
where $w_{1:n} = (w_1, \dots, w_n)$ are nonnegative weights. Setting $w_i = 1$ for all $i$ recovers the original statistic in \eqref{eq-stat1}.

In \citet{li2025likelihood}, the weights $w_i$ in \eqref{eq:gene-stat1} are chosen based on the likelihood ratio (LR) test. According to Lemma 1 in \citet{li2025likelihood}, under the EMS framework,
\[-\log(\xi_{i, \y_i}) \mid y_{-n_0:i-1}, \y_{1:i-1} \sim 
\begin{cases}
\mathrm{Exp}(1) & \text{if $\y_i$ is not generated from $\xi_i$}, \\
\mathrm{Exp}(1 / p_i) & \text{if $\y_i$ is generated from $\xi_i$}.
\end{cases}\]
Thus, the log-likelihood ratio test statistic for assessing whether $\y_i$ is generated from $\xi_i$ is
\begin{align}\label{eq:lrt-stat1}
L(\xi_{1:n}, \y_{1:n}) = \frac{1}{n} \sum_{i=1}^n \frac{1 - p_i}{p_i} \left( \log(\xi_{i, \y_i})\right).
\end{align}
Notably, tokens with smaller NTPs receive larger weights in the LR statistic, thereby amplifying their influence compared to the original statistic. Consequently, even a small number of tokens with low NTPs can substantially enhance the detection power of the LR test. In \citet{Aaronson2023} and \citet{kuditipudi2023robust}, the authors proposed using $-\log(1 - \xi_{i, \y_i})$ in practice, which demonstrated superior performance in watermark detection. In our sub-string identification problem later, however, we found no significant performance differences between the two test statistics. For completeness, we report the results based on $-\log(1 - \xi_{i, \y_i})$ in Supporting Information B.

It is worth noting that the weighting approach can be applied to any watermarking technique whose test statistic admits a summation form, such as the ITS and SynthId methods \citep{dathathri2024scalable}. In the following section, we demonstrate how this idea can be adapted to the ITS method.

\subsection{Adaptive Test Statistic for the ITS Method}
\subsubsection{A General Form of the Adaptive Test Statistic under ITS Framework}
Motivated by the general formulation in \eqref{eq:gene-stat1}, we introduce the following class of test statistics tailored to the ITS method:
\begin{equation}\label{eq:gene-stat2}
\phi(\xi_{1:n}, \y_{1:n};w_{1:n}) = \frac{1}{n}\sum_{i=1}^n w_i (u_i - 1 / 2)\left(\frac{\pi_i(\y_i) - 1}{V - 1} - \frac{1}{2}\right),
\end{equation}
where $w_{1:n} = (w_1, \dots, w_n)$ are nonnegative weights. Setting $w_i = 1$ for all $i$ reduces \eqref{eq:gene-stat2} to the original ITS statistic \eqref{eq-stat2}.

The following corollary provides a sufficient condition for asymptotic full power of the randomization test based on \eqref{eq:gene-stat2} under the ITS framework.

\begin{corollary}\label{cor:its}
If the weights satisfy
\begin{equation}\label{eq:cond-genestat}
\frac{\sum_{i=1}^{n} w_i (1 - p_i)}{\sqrt{\sum_{i=1}^{n} w_i^2}} \to +\infty,  
\end{equation} 
then the conditions of Theorem~\ref{thm:error-control} hold for the statistic defined in \eqref{eq:gene-stat2}. Consequently, the power of the randomization-based test converges to one as $T \to +\infty$ under the ITS framework.
\end{corollary}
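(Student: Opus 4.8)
The plan is to verify the three asymptotic hypotheses of Proposition~\ref{thm:error-control}(ii) --- namely $\sE_{\xi'} = o(\sE_\xi)$, $\sd_{\xi'} = o(\sE_\xi)$, and $\sd_\xi = o(\sE_\xi)$ --- for the weighted ITS statistic \eqref{eq:gene-stat2} with $\y_{1:n} = y_{1:n}$ (the remaining requirement $T > 2/\epsilon - 1$ constrains only the number of randomization draws, not the statistic, and is met by letting $T \to \infty$, which is exactly the advertised mode of convergence; specializing to $w_i \equiv 1$ recovers Corollary~\ref{cor:thm-exp}). Throughout I would condition on $\cF_n = [y_{-n_0:0}, y_{1:n}]$, so the NTPs $p_i$, the weights, and hence all four quantities above are deterministic. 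A structural fact I would record at the outset is that the posterior of the generating key factorizes: the joint law of $(\xi_{1:n}, y_{1:n})$ given the prompt is $\prod_i \nu(\xi_i)\,\ind\{y_i = \Gamma(\xi_i, \mu_i)\}$ with $\mu_i = \mu_i(\cdot\mid y_{-n_0:i-1})$, so $\xi_1,\dots,\xi_n$ --- and therefore the summands of $\phi$ --- are conditionally independent given $\cF_n$ under $\xi$, and trivially so under the independent copy $\xi'$.

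The two $\xi'$ terms are immediate. Since $u_i'\sim\mathrm{Unif}[0,1]$ is independent of $(\pi_i',\cF_n)$, each summand $w_i(u_i'-\tfrac12)\bigl(\tfrac{\pi_i'(\y_i)-1}{V-1}-\tfrac12\bigr)$ has conditional mean zero, whence $\sE_{\xi'}=0$; and because $|u_i'-\tfrac12|\le\tfrac12$ and $|\tfrac{\pi_i'(\y_i)-1}{V-1}-\tfrac12|\le\tfrac12$, conditional independence yields $\sd_{\xi'}^2\le\tfrac{1}{16n^2}\sum_i w_i^2$. The same crude boundedness gives $\sd_\xi^2\le\tfrac{1}{16n^2}\sum_i w_i^2$ under $\xi$ as well.

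The substantive step is a lower bound on $\sE_\xi$ --- the ITS counterpart of Lemma~1 of \citet{li2025likelihood}. I would evaluate $\bbE\bigl[(u_i-\tfrac12)\bigl(\tfrac{\pi_i(y_i)-1}{V-1}-\tfrac12\bigr)\mid\cF_n\bigr]$ directly from the decoder $\Gamma$. Given that $y_i$ was produced by $(\pi_i,u_i)$, for every permutation $\pi_i$ the admissible $u_i$-set is an interval of length $p_i$ with left endpoint $Q := \sum_{j:\pi_i(j)<\pi_i(y_i)}\mu_i(j)$; integrating out $u_i$ therefore leaves the posterior of $\pi_i$ uniform over all permutations, with $\bbE[u_i\mid\pi_i,\cdot]=Q+p_i/2$. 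After centering (note $\bbE[Q]=\tfrac{1-p_i}{2}$ and $\bbE\bigl[\tfrac{R-1}{V-1}\bigr]=\tfrac12$ for $R:=\pi_i(y_i)$), the term equals $\Cov\bigl(Q,\tfrac{R-1}{V-1}\bigr)$ under a uniform $\pi_i$. Writing $Q=\sum_{j\ne y_i}\mu_i(j)\ind\{\pi_i(j)<\pi_i(y_i)\}$ and $R-1=\sum_{j\ne y_i}\ind\{\pi_i(j)<\pi_i(y_i)\}$, and using $\Var(\ind\{\pi_i(j)<\pi_i(y_i)\})=\tfrac14$ together with $\Cov(\ind\{\pi_i(j)<\pi_i(y_i)\},\ind\{\pi_i(j')<\pi_i(y_i)\})=\tfrac1{12}$ for $j\ne j'$, a short pair-correlation computation yields the clean identity
\[
\sE_\xi \;=\; \frac{V+1}{12(V-1)}\cdot\frac{1}{n}\sum_{i=1}^n w_i(1-p_i)\;\ge\;\frac{1}{12n}\sum_{i=1}^n w_i(1-p_i).
\]

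Combining the pieces, $\sE_{\xi'}/\sE_\xi = 0$, while $\sd_\xi/\sE_\xi$ and $\sd_{\xi'}/\sE_\xi$ are each bounded by a universal constant times $\bigl(\sum_i w_i^2\bigr)^{1/2}/\sum_i w_i(1-p_i)$, which tends to $0$ under hypothesis~\eqref{eq:cond-genestat} (this also forces $\sum_i w_i(1-p_i)>0$ eventually, so $\sE_\xi>0$). Hence all three conditions of Proposition~\ref{thm:error-control}(ii) hold, and the stated power conclusion follows. I expect the identity for $\sE_\xi$ to be the only real obstacle: one must argue carefully that conditioning on the realized token preserves uniformity of the permutation --- so the signal is a pure covariance rather than something more delicate --- and then carry out the permutation pair-correlation calculation; everything else reduces to boundedness plus the posterior factorization.
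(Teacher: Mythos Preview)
Your proof is correct and follows essentially the same route as the paper: bound $\sd_\xi,\sd_{\xi'}$ via $|h_i|\le 1/4$, note $\sE_{\xi'}=0$ by independence of $u_i'$, and compute $\sE_\xi$ by first integrating out $u_i$ given the permutation and then exploiting that the posterior of $\pi_i$ given $y_i$ is still uniform. The only cosmetic difference is that the paper conditions on $\pi_i(y_i)$ alone (reducing the inner term to $(1-p_i)\Var\!\bigl(\tfrac{\pi_i(y_i)-1}{V-1}\bigr)$), while you condition on the full $\pi_i$ and compute $\Cov\!\bigl(Q,\tfrac{R-1}{V-1}\bigr)$ via pair correlations --- both give the same constant $\tfrac{V+1}{12(V-1)}$, and your version is arguably more explicit about the posterior-uniformity step.
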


The theorem below generalizes Proposition~\ref{prop-max} by incorporating the weighted ITS statistic.
\begin{theorem}\label{thm:its}
Consider the maximum statistic defined in \eqref{eq:max-stat} with $\phi$ given by \eqref{eq:gene-stat2} in the ITS setting. If
\[C_{N,B}^{-1} \max_{a,b} \mathbb{E}\left[ \phi\left( \xi_{a:a+B-1}, \y_{b:b+B-1} \right) \,\middle|\, \y_{1:m}, y_{-n_0:n} \right] \to +\infty, \]
where $N = \max\{n, m\}$, $C_{N,B} = \sqrt{\Omega_{\max}} \sqrt{\log(N) / B}$, and $\Omega_{\max} = \max_{i \in [n]} w_i^2$, then the power condition \eqref{eq-power} is satisfied.
\end{theorem}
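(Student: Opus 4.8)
The plan is to apply Proposition~\ref{thm:error-control}(ii) with the ``test statistic'' there taken to be the scan statistic $\cM(\xi_{1:n},\y_{1:m})$ from \eqref{eq:max-stat}; thus $\sE_\xi=\bbE[\cM(\xi_{1:n},\y_{1:m})\mid\cF]$, $\sd_\xi=\sd(\cM(\xi_{1:n},\y_{1:m})\mid\cF)$, and $\sE_{\xi'},\sd_{\xi'}$ are the same objects with $\xi_{1:n}$ replaced by an independent key $\xi'_{1:n}$, where $\cF:=[\y_{1:m},y_{-n_0:n}]$. It then suffices to verify the three conditions $\sE_{\xi'}=o(\sE_\xi)$, $\sd_{\xi'}=o(\sE_\xi)$, $\sd_\xi=o(\sE_\xi)$. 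Throughout I would work conditionally on $\cF$ and follow the structure of the proof of Proposition~\ref{prop-max}, the only genuinely new input being the uniform weight bound $|w_i|\le\sqrt{\Omega_{\max}}$.

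Write $\phi_{a,b}:=\phi(\xi_{a:a+B-1},\y_{b:b+B-1})$ and let $\phi'_{a,b}$ be its counterpart under $\xi'$. The first step is a per-window sub-Gaussian bound. Conditionally on $\cF$, $\phi_{a,b}=\tfrac1B\sum_{i=0}^{B-1}w_{a+i}(u_{a+i}-\tfrac12)\bigl(\tfrac{\pi_{a+i}(\y_{b+i})-1}{V-1}-\tfrac12\bigr)$ is a normalized sum of $B$ independent summands, each lying in an interval of length at most $\sqrt{\Omega_{\max}}/2$ since the two centred factors lie in $[-\tfrac12,\tfrac12]$; Hoeffding's lemma then shows $\phi_{a,b}-\bbE[\phi_{a,b}\mid\cF]$ is sub-Gaussian given $\cF$ with variance proxy of order $\sigma^2$, where $\sigma:=\sqrt{\Omega_{\max}/B}$, and likewise for $\phi'_{a,b}$. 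A structural point specific to ITS is that, because $\xi'_i=(\pi'_i,u'_i)$ with $u'_i\sim\mathrm{Unif}[0,1]$ drawn independently of $\pi'_i$ and of $\y$, one has $\bbE[\phi'_{a,b}\mid\cF]=0$ for every $(a,b)$: under the independent key the whole profile of window means vanishes.

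Next I would pass from windows to their maximum, over the at most $M\le(n-B+1)(m-B+1)\le N^2$ index pairs. Set $\mu^*:=\max_{a,b}\bbE[\phi_{a,b}\mid\cF]$, the quantity appearing in the hypothesis. A union bound gives the upper tail $P(\cM-\mu^*>t\mid\cF)\le\sum_{a,b}P(\phi_{a,b}-\bbE[\phi_{a,b}\mid\cF]>t\mid\cF)\le N^2\exp(-t^2/(2\sigma^2))$, and $\cM\ge\phi_{a^\star,b^\star}$ at the maximizing pair gives the lower tail $P(\cM-\mu^*<-t\mid\cF)\le\exp(-t^2/(2\sigma^2))$. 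Integrating both tails yields $\bbE[|\cM-\mu^*|\mid\cF]\lesssim\sigma\sqrt{\log N}=C_{N,B}$ and $\bbE[(\cM-\mu^*)^2\mid\cF]\lesssim C_{N,B}^2$, so $\sE_\xi=\mu^*+O(C_{N,B})$ and $\sd_\xi^2\le\bbE[(\cM-\mu^*)^2\mid\cF]\lesssim C_{N,B}^2$. Running the same argument with $\mu^*$ replaced by $0$ gives $0\le\sE_{\xi'}\lesssim C_{N,B}$ and $\sd_{\xi'}\lesssim C_{N,B}$. Since the hypothesis says $\mu^*/C_{N,B}\to\infty$, we get $\sE_\xi=\mu^*(1+o(1))$ and each of $\sE_{\xi'},\sd_{\xi'},\sd_\xi$ is $O(C_{N,B})=o(\mu^*)=o(\sE_\xi)$; Proposition~\ref{thm:error-control}(ii) then delivers \eqref{eq-power}.

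I expect the main obstacle to be the variance bound $\sd_\xi\lesssim C_{N,B}$ with the correct scaling: a crude bounded-differences (McDiarmid) estimate over the $n$ key coordinates would pick up an extra $\sqrt{n/B}$ factor and thereby force an unstated lower bound on the window length $B$. The way around this is to read $\Var(\cM\mid\cF)$ off the two one-sided exponential bounds above — the upper one from a union bound over all $N^2$ windows, the lower one from a single maximizing window — so that the overlap-induced dependence among the $\phi_{a,b}$ enters only through their marginal sub-Gaussianity. A secondary point needing care is keeping the weight dependence tight: bounding each summand by $\tfrac14\sqrt{\Omega_{\max}}$, rather than by something like $\tfrac14\sqrt{\sum_i w_i^2}$, is precisely what produces the threshold $C_{N,B}=\sqrt{\Omega_{\max}}\sqrt{\log(N)/B}$, and one should double-check that $\bbE[\phi'_{a,b}\mid\cF]=0$ genuinely rests only on the within-key independence $u'_i\perp\pi'_i$.
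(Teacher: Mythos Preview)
Your proposal is correct and follows essentially the same route as the paper: Hoeffding on each window (with the bound $|w_ih_i|\le\sqrt{\Omega_{\max}}/4$ producing the sub-Gaussian scale $\sqrt{\Omega_{\max}/B}$), a union bound over at most $N^2$ windows, the observation $\bbE[\phi'_{a,b}\mid\cF]=0$ for the independent key, and then an appeal to Proposition~\ref{thm:error-control}(ii). The only cosmetic difference is that the paper packages the independent-key control as a bound on the quantile $F^{-1}(s)=O(C_{N,B})$ and defers to the proof of Proposition~\ref{thm:error-control}, whereas you integrate the two one-sided tails to verify the moment conditions $\sE_{\xi'},\sd_{\xi'},\sd_\xi=O(C_{N,B})$ directly; the paper also inserts an explicit step integrating out the tokens of $y_{1:n}$ not contained in $\y_{1:m}$ to pass from conditioning on $[\y_{1:m},y_{-n_0:n}]$ to $\cF_m$, which you should mention but which is immediate since your tail bounds are uniform.
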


Although the weight $w_i$ for ITS cannot be derived from the likelihood ratio, it is still possible to use the choice $w_i = (1 - p_i) / p_i$. However, empirical results indicate that the weighted test statistic in \eqref{eq:gene-stat2} performs poorly in practical settings. Consequently, we propose a new approach for constructing an adaptive test statistic specifically designed for the ITS framework.

\subsubsection{Robust Adaptive Test Statistic Using Huber's Contamination Model}
We first clarify why the LR test statistic cannot be directly applied in the ITS setting. Define $\mu_i(k) = p(k \mid y_{-n_0:i-1})$ for $1 \leq k \leq V$ and $1 \leq i \leq n$, and introduce intervals $\cI_i \coloneqq [\mu_i(j : \pi_i(j) < \pi_i(k)), \mu_i(j : \pi_i(j) \leq \pi_i(k))]$. By the construction of the ITS method, we obtain the following lemma:
\begin{lemma}\label{lemma:its}
Under the ITS framework, the conditional distribution of $u_i$ is given by
\[u_i \mid y_{-n_0:i-1} \sim 
\begin{cases}
\text{Unif}[0, 1] & \text{if $y_i$ is not generated from $\xi_i$}, \\
\text{Unif}[\cI_i] & \text{if $y_i$ is generated from $\xi_i$}.
\end{cases}\]
\end{lemma}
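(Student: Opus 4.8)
The plan is to establish Lemma~\ref{lemma:its} directly from the construction of the ITS decoder $\Gamma$, conditioning on the context $y_{-n_0:i-1}$ (which fixes $\mu_i$) and, in the ``generated'' case, on the realized permutation $\pi_i$ and token $y_i$. The key observation is that in the ITS scheme the permutation $\pi_i$ and the uniform $u_i$ are drawn independently of the context and of each other; the dependence between $y_i$ and $\xi_i=(\pi_i,u_i)$ is induced entirely through the deterministic map $\Gamma$. So the two cases of the lemma correspond to (a) the marginal law of $u_i$ when $u_i\perp y_i$, and (b) the conditional law of $u_i$ given the event $\{\Gamma(\xi_i,\mu_i)=y_i\}$.

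First I would handle the ``not generated'' case. If $y_i$ is not produced from $\xi_i$ — e.g.\ it is a human-written token, an edited token, or more generally any token whose value does not depend on $\xi_i$ given $y_{-n_0:i-1}$ — then $u_i$ is by construction an independent $\text{Unif}[0,1]$ draw, and conditioning on $y_{-n_0:i-1}$ changes nothing. Hence $u_i \mid y_{-n_0:i-1} \sim \text{Unif}[0,1]$. This step is essentially a restatement of the sampling mechanism and needs only a line.

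Next I would handle the ``generated'' case, which is the substantive part. Recall from Example~\ref{ex2} that $\Gamma((\pi_i,u_i),\mu_i)=k$ holds precisely when $\min\{\pi_i(l) : \mu_i(j:\pi_i(j)\le\pi_i(l))\ge u_i\}=\pi_i(k)$, and it was shown there that this is equivalent to
\[
\mu_i\bigl(j : \pi_i(j) < \pi_i(k)\bigr) \;<\; u_i \;\le\; \mu_i\bigl(j : \pi_i(j) \le \pi_i(k)\bigr),
\]
i.e.\ $u_i \in \cI_i$, where $\cI_i$ is the interval defined just before the lemma statement (with endpoints determined by $\pi_i$ and $\mu_i$). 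I would make this equivalence explicit, then condition on $\pi_i$ and on $y_{-n_0:i-1}$: given these, $\cI_i$ is a fixed interval of length $\mu_i(y_i)$, and since $u_i$ was drawn $\text{Unif}[0,1]$ independently of $\pi_i$, the conditional law of $u_i$ given additionally the event $\{u_i\in\cI_i\}$ is exactly the uniform distribution on $\cI_i$ by the standard fact that a uniform variable restricted to a sub-interval is uniform on that sub-interval. Since $\{y_i \text{ is generated from } \xi_i\} = \{\Gamma(\xi_i,\mu_i)=y_i\} = \{u_i\in\cI_i\}$, this gives $u_i \mid y_{-n_0:i-1} \sim \text{Unif}[\cI_i]$ on that event, as claimed. (If one prefers the statement without conditioning on $\pi_i$, it is a mixture over $\pi_i$ of $\text{Unif}[\cI_i]$ laws; the lemma as written should be read with $\cI_i$ being the random interval induced by $\pi_i$.)

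I do not anticipate a genuine obstacle here — the lemma is a bookkeeping consequence of the ITS construction and the elementary ``uniform restricted to a subinterval is uniform'' fact. The one point requiring care is the precise meaning of ``$y_i$ is generated from $\xi_i$'': I would state at the outset that this event is identified with $\{\Gamma(\xi_i,\mu_i)=y_i\}$ and that, in the mixture-of-sources picture underlying the segmentation problem, the ``not generated'' alternative means $y_i$ is conditionally independent of $\xi_i$ given the context, so that no information about $u_i$ leaks through $y_i$. Making that definitional point cleanly is what makes the two-line argument in each case rigorous.
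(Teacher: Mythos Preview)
Your proposal is correct and follows essentially the same approach as the paper's proof: both cases hinge on (a) independence of $u_i$ from $y_i$ in the non-generated case, and (b) the equivalence $\{\Gamma(\xi_i,\mu_i)=y_i\}=\{u_i\in\cI_i\}$ together with the ``uniform restricted to a subinterval is uniform'' fact in the generated case. Your write-up is considerably more explicit about the conditioning structure (in particular the role of $\pi_i$) than the paper's two-line argument, but the underlying reasoning is identical.
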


The likelihood ratio for testing the hypothesis $H_{i0}\colon y_i$ is not generated from $\xi_i$ against the alternative $H_{i1}\colon y_i$ is generated from $\xi_i$ is thus $\Lambda_i =  1 / p_i$ if $u_i \in \cI_i$ and $\Lambda_i = 0$ otherwise. Therefore, the overall likelihood ratio statistic $\Lambda \coloneqq \prod_{i=1}^n \Lambda_i$ is equal to $\prod_{i=1}^n 1 / p_i $ if all $u_i \in \cI_i$ and $\Lambda = 0$ otherwise. Since $\Lambda \neq 0$ only if all $u_i \in \mathcal{I}_i$, implying all tokens $y_{1:n}$ are generated from $\xi_{1:n}$. In practical applications with modified text $\y_{1:m}$, expecting all tokens to retain alignment with the generated sequence is unrealistic. Consequently, the LR test statistic always equals zero in practice, rendering it ineffective.

To overcome this limitation, we propose using Huber's $\epsilon$-contamination model \citep{huber1973robust} under the alternative hypothesis: $u_i \sim (1-\epsilon)\text{Unif}[\cI_i] + \epsilon\text{Unif}[0, 1]$, where $\epsilon \in [0, 1]$ denotes the contamination proportion. Under this assumption, we obtain the likelihood ratio as $\Lambda_i = (1 - \epsilon) / p_i\ind\{u_i \in \cI_i\} + \epsilon$, and thus the log-likelihood ratio statistic is
\[\log(\Lambda) = \sum_{i=1}^n \log\left(\frac{1 - \epsilon}{p_i}\ind\{u_i \in \cI_i\} + \epsilon\right). \] 
This motivates our adaptive test statistic for the ITS method:
\begin{equation}\label{eq:its-stat}
\phi(\xi_{1:n}, \y_{1:n}) = \max_{\epsilon}\sum_{i=1}^n \log\left(\frac{1 - \epsilon}{p_i}\ind\{u_i \in \cI_i\} + \epsilon\right),
\end{equation}
which inherently assigns greater weights to more informative tokens, specifically those with smaller $p_i$.

\begin{remark}
Notice that although our test statistics are derived from Huber's $\epsilon$-contamination model, the problem we address differs from the sparse detection problem extensively studied in the statistics literature \citep{ingster1996some,donoho2004higher,cai2014optimal,arias2019detection}. In their work, the hypothesis testing problems are
\[H_0: X_1, \ldots, X_n \sim F 
\quad \text{v.s.} \quad 
H_a: X_1, \ldots, X_n \sim (1 - \epsilon)F(\cdot) + \epsilon F(\cdot - \mu),\]
whereas in our setting the hypothesis testing problems are
\[H_0: X_i \sim F 
\quad \text{v.s.} \quad 
H_a: X_i \sim (1 - \epsilon)F(\cdot) + \epsilon F_i(\cdot),\]
for $i = 1, \ldots, n$, where $F_i$ denotes a distribution that depends on the index $i$. In fact, our framework is more closely related to the setting in \citet{li2024robust}. However, we develop the model under a different watermark generation framework and propose alternative methods to address the problem. 
\end{remark}

\section{Sub-string identification}\label{sec:cpd}
In this paper, we focus on a relatively underexplored question in the existing literature: Given that the global null hypothesis $H_0$ is rejected, how can we identify the AI-generated sub-strings within the modified text $\y_{1:m}$?

To formalize the setup, we assume that the text published by a third-party user has the following structure:
\begin{equation}\label{eq-seq}
\underbrace{\y_1\y_2\cdots \y_{\tau_1}}_{\text{non-watermarked}}
\underbrace{\y_{\tau_1+1}\cdots \y_{\tau_2}}_{\text{watermarked}}
\underbrace{\y_{\tau_2+1}\cdots\y_{\tau_3}}_{\text{non-watermarked}}
\underbrace{\y_{\tau_3+1}\cdots \y_{\tau_4}}_{\text{watermarked}}\cdots,
\end{equation}
where the sub-strings $\y_{\tau_1+1}\cdots \y_{\tau_2}$ and $\y_{\tau_3+1}\cdots \y_{\tau_4}$ are watermarked. Importantly, the ordering of watermarked and non-watermarked sub-strings is arbitrary and does not impact the proposed method. The primary objective here is to accurately segment the text into watermarked and non-watermarked sub-strings.

We follow the framework proposed by \citet{li2024segmenting} to frame it as a change point detection problem. We define a sequence of moving windows $\mathcal{I}_i=[(i-B/2)\vee 1,(i+B/2)\wedge m]$, where $B$ is the window size, assumed to be an even number for simplicity, and $1 \leq i \leq m$. For each sub-string, we calculate a randomization-based $p$-value as follows:
\begin{align}\label{pvaluecalculation}
\pt_i = \frac{1}{T+1}\left(1+\sum^{T}_{t=1}\ind\{\cM(\xi_{1:n},\y_{\mathcal{I}_i})\leq \cM(\xi_{1:n}^{(t)},\y_{\mathcal{I}_i})\}\right), \quad 1\leq i\leq m,    
\end{align}
where $\cM(\xi_{1:n},\y_{\mathcal{I}_i})=\max_{1\leq k\leq n}\phi(\xi_{\mathcal{J}_k},\y_{\mathcal{I}_i})$, with $\mathcal{J}_k=[(k-B/2)\vee 1,(k+B/2)\wedge n]$.

The text is thereby transformed into a sequence of $p$-values: $\pt_1,\dots,\pt_m$. Under the null hypothesis, the $p$-values are approximately uniformly distributed, whereas under the alternative, they concentrate near zero. In a simplified scenario where the published text is divided into two halves—one watermarked and the other non-watermarked (or vice versa)—the change point location can be identified via:
\begin{align}\label{def-S}
\hat{\tau}=\argmax_{1\leq \tau<m}S_{1:m}(\tau),\quad S_{1:m}(\tau):=\sup_{t\in [0,1]}\frac{\tau(m-\tau)}{m^{3/2}}|F_{1:\tau}(t)-F_{\tau+1:m}(t)|,    
\end{align}
where $F_{a:b}(t)$ represents the empirical cumulative distribution function (cdf) of $\{\pt_i\}_{i=a}^{b}$. Observe that the proposed test statistic \eqref{def-S} is closely related to the scan statistic in the sparse detection literature \citep{arias2019detection}, as both are defined as the supremum of the difference between two empirical distributions. Indeed, the underlying idea originates from classical goodness-of-fit tests in the nonparametric literature, such as the Kolmogorov–Smirnov test.

To determine whether $\hat{\tau}$ is statistically significant, we employ a block bootstrap-based approach. Conditional on $\mathcal{F}_m$, the $p$-value sequence exhibits $B$-dependence, meaning $\pt_i$ and $\pt_j$ are independent only if $|i-j| > B$. Traditional bootstrap or permutation methods for independent data are unsuitable here, as they fail to capture the dependence between neighbouring $p$-values. Instead, we adopt the moving block bootstrap for time series data \citep{kunsch1989jackknife,liu1992moving}.

Given a block size $B'$, we construct $m-B'+1$ blocks as $\{\pt_{i},\dots,\pt_{i+B'-1}\}$ for $1\leq i\leq m-B'+1$. We then randomly sample $m/B'$ blocks (assuming $m/B'$ is an integer) with replacement and concatenate them to form resampled $p$-values $\pt_1^*,\dots,\pt_m^*$. Based on these bootstrapped $p$-values, we compute $F^*_{a:b}(t)$ and define:
\begin{align*}
S^*_{1:m}(\tau)=\sup_{t\in [0,1]}\frac{\tau(m-\tau)}{m^{3/2}}|F_{1:\tau}^*(t)-F_{\tau+1:m}^*(t)|.    
\end{align*}
This procedure is repeated $T'$ times, resulting in statistics $S^{*,(t)}_{1:m}(\tau)$ for $t=1,2,\dots,T'$. The corresponding bootstrap-based $p$-value is:
\begin{align}\label{ptildestatistic}
\tilde{\pt}_{T'}=\frac{1}{T'+1}\left(1+\sum^{T'}_{t=1}\ind\left\{\max_{1\leq \tau<m}S_{1:m}(\tau)\leq \max_{1\leq \tau<m}S^{*,(t)}_{1:m}(\tau)\right\}\right). 
\end{align}
We conclude that a statistically significant change point exists if $\tilde{\pt}_{T'}\leq \alpha$.

Before presenting the details of the algorithm for multiple change-point detection, we emphasize the advantages of our proposed method for substring identification. Compared to the approach of \citet{li2024fastcpd}, our framework employs adaptive test statistics to generate the sequence of $p$-values. These adaptive test statistics are more powerful than the original test statistics, leading to enhanced watermark detection and the generation of higher-quality $p$-value sequences, which in turn facilitate more accurate identification of true change points. While the original method requires estimating NTPs based on prompt estimation—a significantly more challenging task—our framework eliminates the need for prompt estimation. Specifically, we partition the text into small segments, allowing the preceding tokens to serve as natural prompts and provide accurate NTP estimates for each segment. Since the true prompt only affects the initial segment, we simply use an empty set as the initial prompt, thereby eliminating the need for precise prompt estimation.

\subsection{Binary Segmentation}
In this section, we present an algorithm for separating watermarked and non-watermarked sub-strings by identifying multiple change point locations. The literature on identifying multiple change points generally features two main types of algorithms: (i) exact or approximate optimization through minimizing a penalized cost function \citep{harchaoui2010multiple, TRUONG2020107299, doi:10.1080/01621459.2012.737745, li2024fastcpd, zhang2023sequential}, and (ii) approximate segmentation algorithms, for example, binary segmentation \citep{vostrikova1981detecting}, circular binary segmentation \citep{olshen2004circular}, Wild Binary Segmentation(WBS) \citep{10.1214/14-AOS1245}. Our proposed algorithm is based on the widely used binary segmentation method, a top-down approach for detecting multiple change points. 

Initially proposed by \cite{vostrikova1981detecting}, binary segmentation identifies a single change point using a CUSUM-like procedure and subsequently applies a divide-and-conquer approach to detect additional change points within sub-segments until a stopping criterion is met. However, as a greedy algorithm, binary segmentation can be less effective when multiple change points are present. WBS \citep{10.1214/14-AOS1245} and seeded binary segmentation (SeedBS) \citep{10.1093/biomet/asac052} improve upon binary segmentation by considering multiple segments to detect and aggregate potential change points. SeedBS further addresses the issue of excessively long sub-segments in WBS, which may contain several change points. It introduces multiple layers of intervals, each consisting of a fixed number of intervals of varying lengths and shifts, to improve the search for change points. 

When comparing multiple candidates for the next change point, the narrowest-over-threshold (NOT) method prioritizes narrower sub-segments preventing the algorithm from considering sub-segments containing two or more change points \citep{10.1111/rssb.12322}. Based on these ideas, we propose an effective algorithm to identify the change points that separate watermarked and non-watermarked sub-strings. The details of this algorithm are presented in Algorithm~\ref{algorithm:seedbs-not} in Supporting Information B. More discussions about the algorithm are deferred to Supporting Information E.

\section{Numerical experiments}\label{sec:numerical-experiments}
We consider two LLMs: \verb|Meta-Llama-3-8B| from Meta \citep{llama3modelcard} and \verb|Mistral-7B-v0.1| from Mistral AI \citep{jiang2023mistral}, along with two watermarking techniques, EMS and ITS, for watermarked text generation. We select the hyperparameters according to the optimal values recommended in \citet{li2024segmenting} and \citet{li2025likelihood}, setting $B = B' = 20$ throughout our experiments. To study the impact of the window size $B$, we fix $B' = 20$ and vary the value of $B$ in the set $\{10, 20, 30, 40, 50\}$. Table~\ref{tab:diff_B} reports the Rand index values for each setting. A common recommendation in the time series literature is to set $B = C n^{1/3}$, where $n$ denotes the sample size, as shown in Corollary~1 of \citet{lahiri1999theoretical}. Based on our experience, choosing $B = \lfloor 3 n^{1/3} \rfloor$ (e.g., when $n=500$, $B \approx 20$) often yields good finite-sample performance. Further discussion is provided in the Supporting Information D. In the main contents, we present results for the \verb|Meta-Llama-3-8B| model using the EMS method, while results for other settings are included in Supporting Information B and C.

\subsection{Test Statistics Based on Estimated NTPs}\label{sec:est-ntp}
In practice, the true NTPs $\{p_i\}$ are unknown, even when the LLM is accessible since the original prompt is never observed. 
In \citet{li2025likelihood}, the authors proposed to estimate prompt and using the estimated prompt to estimate NTP. However, in our segmenting framework, we partition the text into small segments, where
preceding tokens naturally serve as prompts, providing accurate NTP estimates for each segment. The true prompt affects only
the initial segment, making precise estimation redundant; thus, we can conveniently use an empty set as the initial prompt.

Let $q_i$ be a generic estimate of $p_i$. In practice, the estimated NTPs $q_i$ may be extremely small, leading to instability in the proposed method. To mitigate this, we follow the idea from \citet{li2025likelihood} to regularize the estimated NTPs to enhances the algorithm’s robustness. Specifically, given an estimated NTP $q_i$, we define the regularized NTP as
\[\mathrm{S}(q_i, \lambda) = \lambda q_i + (1-\lambda) p_{i,0},\]
where $\lambda \in (0, 1)$ and $(p_{1,0}, \dots, p_{n,0})$ is a prespecified vector of probabilities. We then modify the test statistic using the regularized NTPs:
\begin{align}
    \phi_{\mathrm{shrinkage}}(\xi_{1:n}, \y_{1:n}; q_{1:n}, \lambda) = \frac{1}{n} \sum_{i=1}^n \frac{1 - \mathrm{S}(q_i, \lambda)}{\mathrm{S}(q_i, \lambda)} \log(\xi_{i,\y_i}). \label{eq:shrinkage-test-statistics}
\end{align}

We compare three methods, all based on the randomization test proposed in Section~\ref{sec:dect}, which differ only in the choice of test statistics. The \texttt{baseline} method uses the original test statistic defined in \eqref{eq-stat1}, which is also used in \citet{li2024segmenting}. The \texttt{oracle} method employs the adaptive test statistic from \eqref{eq:lrt-stat1}. The \texttt{empty} method uses the test statistic in \eqref{eq:shrinkage-test-statistics} with a shrinkage parameter $\lambda = 0.5$ and $q_i = q_{\emptyset,i}$, where $q_{\emptyset,i}$ represents the estimated NTPs with an empty prompt. The \texttt{optim} method uses the same test statistic as the \texttt{empty} approach with $\lambda = 0.5$ and $q_i = q_{\mathrm{opt},i}$, where $q_{\mathrm{opt},i}$ is the estimated NTPs derived from solving the optimization problem using the instruction set $\mathcal{P}_{\mathrm{opt}}$ \citep{li2025likelihood}. The idea of this approach is to first generate an instruction set containing candidate prompts, and then select the ``best prompt'' by maximizing the likelihood. As the instruction set constructed by \citet{li2025likelihood} relies on access to the true prompt—which is not feasible in practical applications—we omit the details here. This method is included primarily to illustrate that, within our framework, it is unnecessary to estimate the true prompt.

Additionally, we adapt several state-of-the-art watermark detection methods, including \citet{xie2025debiasing} and \citet{li2024robust}, to our segmentation framework, and also compare with the approach of \citet{kashtan2023information}. Our proposed method consistently outperforms these alternatives across all settings. Due to space constraints, further implementation details and results are provided in the Supporting Information B.

\subsection{Segmenting Watermarked Texts}\label{sec:segmenting-watermarked-texts}
In this section, followed setting in \citet{li2024segmenting}, we consider two types of attacks—insertion and substitution—that can introduce change points in LLM-generated texts. To evaluate the effectiveness of the proposed method, we examine four settings with varying numbers of change points and segment lengths. In all settings, we perform each experiment with $T = 99$, $T' = 999$ permutations and $100$ independent Monte Carlo replications.

\begin{itemize}
    \item \textbf{Setting 1 (No change points).} Generate $500$ tokens with watermarks. This setting is used to evaluate false positives, and the corresponding results are deferred to the Supporting Information B.
    \item \textbf{Setting 2 (Insertion attack).} Generate $500$ tokens with watermarks, then insert up to $250$ non-watermarked tokens at approximately index $250$ while preserving semantic coherence. This setting introduces a single change point near index $251$.
    \item \textbf{Setting 3 (Substitution attack).} Generate $500$ tokens with watermarks, then substitute tokens with indices ranging from $201$ to $300$ with non-watermarked text following 
    \begin{align}
        \y_i = \begin{cases}
            \Gamma(\xi_i, p(\cdot \mid \y_{-n_0:i-1})) & \text{if } e_i = 0, \\
            \text{Multinomial}(p(\cdot \mid \y_{-n_0:i-1})) & \text{if } e_i = 1.
        \end{cases} \label{eq:substitution-ei}
    \end{align}
    This setting introduces two change points at indices $201$ and $301$.
    \item \textbf{Setting 4 (Insertion and substitution attacks).} Generate $500$ tokens with watermarks, substitute tokens with indices ranging from $101$ to $200$ with non-watermarked text following \eqref{eq:substitution-ei}, and insert up to $100$ non-watermarked tokens at approximately index $300$. This setting introduces four change points at indices $101$, $201$, around $301$, and around $401$.
\end{itemize}

\begin{figure}
    \centering
    \includegraphics[width=0.75\linewidth]{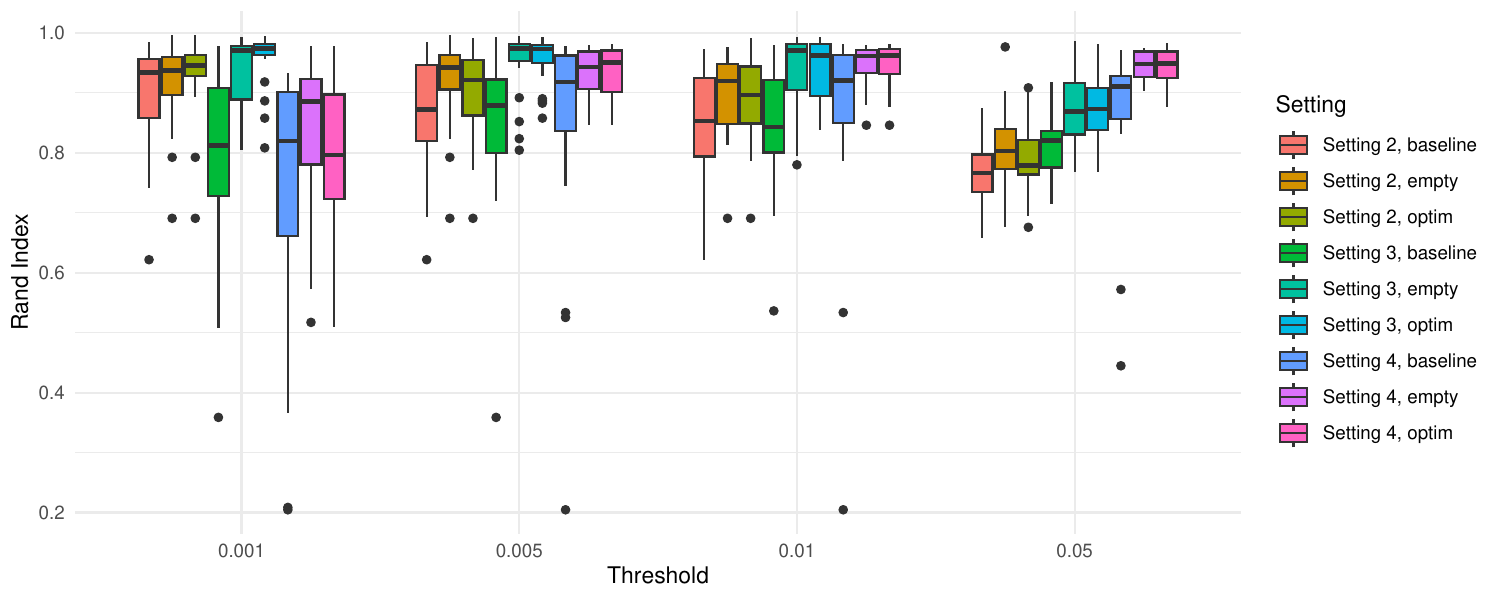}
    \caption{Boxplots of the Rand index comparing clusters identified through detected change points with the true clusters defined by the true change points, for different thresholds in the EMS framework using the \texttt{Llama} LLM.}
    \label{fig:ml3-EMS-rand_index}
\end{figure}

In these experiments, we evaluate clustering performance by comparing the clusters identified via detected change points with the ground truth clusters defined by the true change points, using the Rand index \citep{doi:10.1080/01621459.1971.10482356}. A higher Rand index indicates superior clustering accuracy. As shown in Figure~\ref{fig:ml3-EMS-rand_index}, both adaptive methods consistently outperform the \texttt{baseline} method across all scenarios, further illustrating the advantages of the proposed approach. Moreover, the \texttt{empty} method performs comparably to, and sometimes better than, the \texttt{optim} method. This result supports our claim that prompt estimation is unnecessary in our framework.

Figure~\ref{fig:example-pvalue} further presents the $p$-value sequences produced by different methods under a fixed prompt. Recall that when no watermark is present, $p$-values follow a $\text{Uniform}[0, 1]$ distribution, whereas the presence of a watermark results in $p$-values concentrated near zero. Relative to the \texttt{baseline} method, the two adaptive methods generate higher-quality $p$-value sequences—that is, sequences that more closely align with the expected distribution under each scenario. Since the effectiveness of change point detection critically depends on the quality of the $p$-value sequences, this finding explains why the proposed methods consistently outperform the baseline in segmenting watermarked text. Once again, the \texttt{empty} method performs comparably to the \texttt{optim} method, reinforcing that prompt estimation is not required in our framework.

\begin{figure}
    \centering
    \includegraphics[width=0.75\linewidth]{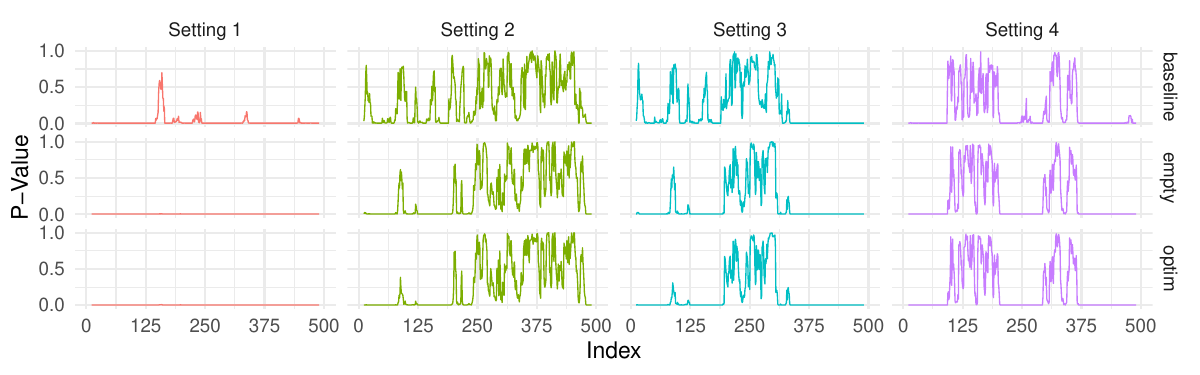}
    \caption{Comparison of $p$-value sequences generated by different methods under a fixed prompt. The two adaptive methods produce higher-quality $p$-value sequences than the \texttt{baseline}, explaining their superior performance in change-point detection. The \texttt{empty} method performs comparably to the \texttt{optim} method, indicating that prompt estimation is unnecessary in our framework.}
    \label{fig:example-pvalue}
\end{figure}

\section{Conclusion}\label{sec:conc}
In this study, we extended the likelihood-based LLM detection method to a more general class of adaptive methods. By formulating the segmentation task as a change-point detection problem, we employed adaptive test statistics to accurately identify watermarked substrings and validated the effectiveness of our approach through extensive simulation studies. These experiments demonstrated the substantial advantages of the proposed methods over existing techniques.

An important direction for future work is to address scenarios in which published texts contain mixed watermarked content generated by different LLMs employing distinct watermarking schemes. This setting introduces the challenge of segmenting text into substrings attributable to different LLMs, each characterized by its own sequence of keys, $p$-values, and change points. The development of algorithms capable of aggregating and analysing such results represents a promising avenue for advancing watermark detection and segmentation research.

\bibliographystyle{plainnat}
\bibliography{main}

\newpage
\appendix
\noindent\textbf{\LARGE Appendices}
\renewcommand\thesection{\Alph{section}}
\numberwithin{equation}{section}
\numberwithin{table}{section}
\numberwithin{figure}{section}

\section{Proofs of the main results}
We provide theoretical proofs for all results to ensure self-containment.
\begin{proof}[Proof of Proposition \ref{thm:error-control}]
~\\
(i) For simplicity of notation, let $\varphi \coloneqq \phi(\xi_{1:n}, \y_{1:m})$ and $\varphi^{(t)} \coloneqq \phi(\xi_{1:n}^{(t)}, \y_{1:m})$ for $t = 1, \dots, T$. Under the null hypothesis, $\xi_{1:n}$ is independent of $\y_{1:m}$, which implies that the pairs 
$$(\xi_{1:n}, \y_{1:m}), (\xi_{1:n}^{(1)}, \y_{1:m}), \dots, (\xi_{1:n}^{(T)}, \y_{1:m})$$
follow the same distribution. Therefore, $\varphi, \varphi^{(1)}, \dots, \varphi^{(T)}$ are exchangeable. This exchangeability ensures that the rank of $\varphi$ relative to $\{\varphi, \varphi^{(1)}, \dots, \varphi^{(T)}\}$ is uniformly distributed. Let the order statistics be denoted as $\varphi_{(1)} \leq \cdots \leq \varphi_{(T + 1)}$. Then, we have
\[P\left(\varphi = \varphi_{(j)}\right) = \frac{1}{T + 1},\qquad j = 1, \dots, T + 1.\]
Thus, for $j = 1, \dots, T + 1$, we obtain
\[P\left(\pt_T \leq \frac{j}{T + 1}\right) = P\left(\varphi \in \left\{\varphi_{(T+2-j)}, \dots, \varphi_{(T+1)}\right\}\right) = \frac{j}{T + 1}.\]
Finally, we have
\[P\left(\pt_T \leq \alpha \right) = \frac{\lfloor (T+1)\alpha \rfloor}{T + 1} \leq \alpha.\]

(ii) By Chebyshev's inequality we have
\[P\left( |\phi(\xi'_{1:n}, \y_{1:m}) - \sE_{\xi'}| \geq \epsilon \sd_{\xi'} \lvert \cF_m \right) \leq \frac{\Var(\phi(\xi'_{1:n}, \y_{1:m}) \lvert \cF_m)}{\epsilon^2 \sd_{\xi'}^2} = \frac{1}{\epsilon^2},\]
for all $\epsilon \geq 0$. Similarly, we have $\phi(\xi'_{1:n}, \y_{1:m}) = \sE_{\xi'} + O_p(\sd_{\xi'})$ given $\cF_m$.

Let the distribution of $\phi(\xi'_{1:n}, \y_{1:m})$ conditional on $\cF_m$ be denoted by $F$, and the empirical distribution of $\{\varphi^{(t)}\}_{t=0}^T$ by $F_T$, where we set $\varphi^{(0)} = \varphi$. 
Let $q_{1-\alpha,T} = \varphi_{(T+2-j_\alpha)}$ with $j_\alpha = \lfloor (T+1)\alpha \rfloor$. Note that $F_T(q_{1-\alpha,T}) = 1 - (j_\alpha - 1) / (T+1)$. Our test rejects the null whenever $\pt_T \leq \alpha$, which is equivalent to rejecting the null if $\varphi \geq \varphi_{(T+2-j_\alpha)}$.
By the Dvoretzky–Kiefer–Wolfowitz inequality, we have
\begin{align*}
&P(|F_T(q_{1-\alpha,T}) - F(q_{1-\alpha,T})| > \epsilon \lvert \cF_m)
\leq P\left(\sup_x |F_T(x) - F(x)| > \epsilon \lvert \cF_m\right) \leq C_1 \exp(-2T\epsilon^2)
\end{align*}
for some constant $C_1 > 0$, which implies that, with probability greater than $1 - C_1 \exp(-2T\epsilon^2)$, 
$F(q_{1-\alpha,T}) < 1 - (j_\alpha - 1) / (T+1) + 2\epsilon$.
Define $F^{-1}(t) = \inf\{s: F(s) \geq t\}$ and the event $\mathcal{A}_T = \{q_{1-\alpha,T} < F^{-1}(1 - (j_\alpha - 1) / (T+1) + 2\epsilon)\}$. Then we have $P(\mathcal{A}_T \lvert \cF_m) \geq 1 - C_1 \exp(-2T\epsilon^2)$. In addition, as $\phi(\xi'_{1:n}, \y_{1:m}) = \sE_{\xi'} + O_p(\sd_{\xi'})$, we have $F^{-1}(s) = \sE_{\xi'} + O(\sd_{\xi'})$ for any $s < 1$.

Notice that $\sd_{\xi}^{-1} \bigl(\phi(\xi_{1:n}, \y_{1:m}) - \sE_{\xi}) \lvert \cF_m]\bigr) = O_p(1)$. Hence, for $T > 2/\epsilon - 1$,
\begin{align*}
&P(\phi(\xi_{1:n}, \y_{1:m}) \geq q_{1 - \alpha, T} \lvert \cF_m) \\
\geq &P\left( \sd_{\xi}^{-1} \bigl( \phi(\xi_{1:n}, \y_{1:m}) - \sE_{\xi} \bigr) + \sd_{\xi}^{-1} \sE_{\xi} \geq \sd_{\xi}^{-1} q_{1 - \alpha, T}, \mathcal{A}_T \lvert \cF_m \right) \\
\geq &P\Big( O_p(1) + \sd_{\xi}^{-1} \sE_{\xi} \geq \sd_{\xi}^{-1} F^{-1}(1 - (j_\alpha - 1) / (T+1) + 2\epsilon), \mathcal{A}_T \lvert \cF_m \Big) \\
\geq &P\Big( O_p(1) + \sd_{\xi}^{-1} \sE_{\xi} \geq \sd_{\xi}^{-1} F^{-1}(1 - \alpha + 3\epsilon), \mathcal{A}_T \lvert \cF_m \Big) \\
\geq &P\Big( O_p(1) + \sd_{\xi}^{-1} \sE_{\xi} \geq \sd_{\xi}^{-1} \bigl(\sE_{\xi'} + O(\sd_{\xi'})\bigr), \mathcal{A}_T \lvert \cF_m \Big) \\
\geq &1 - C_1 \exp(-2T\epsilon^2) + o(1),
\end{align*}
where we have used Condition $\sd_{\xi} = o(\sE_{\xi})$, $\sd_{\xi'} = o(\sE_{\xi})$ and $\sE_{\xi'} = o(\sE_{\xi})$to establish the convergence.
\end{proof}

\begin{proof}[Proof of Proposition \ref{prop-max}]
We only prove the result under the setting of Example \ref{ex2} as the proof for Example \ref{ex1} is similar with the help of Bernstein's inequality. Because $|h_i|\leq 1/4$, by Hoeffding's inequality, we have
\begin{align*}
&P\left(|\phi(\xi_{a:a+B-1},\y_{b:b+B-1})-\mathbb{E}[\phi(\xi_{a:a+B-1},\y_{b:b+B-1})|\y_{1:m},y_{-n_0:n}]|>t|\y_{1:m},y_{-n_0:n}\right)
\\ \leq&  2\exp\left(-8Bt^2\right). 
\end{align*}
By the union bound, we have 
\begin{align*}
&P\Bigg(\max_{1\leq a\leq n-B+1,1\leq b\leq m-B+1}|\phi(\xi_{a:a+B-1},\y_{b:b+B-1})
\\&-\mathbb{E}[\phi(\xi_{a:a+B-1},\y_{b:b+B-1})|\y_{1:m},y_{-n_0:n}]|
>t|\y_{1:m},y_{-n_0:n}\Bigg)
\\ \leq&  2(n-B+1)(m-B+1)\exp\left(-8Bt^2\right).  
\end{align*}
Integrating out the strings in $y_{1:n}$ that are not contained in $\y_{1:m}$, we obtain
\begin{align*}
&P\Bigg(\max_{1\leq a\leq n-B+1,1\leq b\leq m-B+1}|\phi(\xi_{a:a+B-1},\y_{b:b+B-1})
\\&-\mathbb{E}[\phi(\xi_{a:a+B-1},\y_{b:b+B-1})|\y_{1:m},y_{-n_0:n}]|>t|\cF_m\Bigg)
\\ \leq&  2(n-B+1)(m-B+1)\exp\left(-8Bt^2\right),
\end{align*}
where $\cF_m = [\y_{1:m},y_{-n_0:0}]$. Thus, conditional on $\cF_m$, we have
\[\max_{a,b}\{|\mathbb{E}[\phi(\xi_{a:a+B-1},\y_{b:b+B-1})| \cF_m]-\phi(\xi_{a:a+B-1},\y_{b:b+B-1})|\}=O(C_{N,B}).\]
Note that 
\begin{align*}
\phi(\xi_{1:n},\y_{1:m})=&\max_{a,b}\phi(\xi_{a:a+B-1},\y_{b:b+B-1})
\\ \geq& \max_{a,b}\mathbb{E}[\phi(\xi_{a:a+B-1},\y_{b:b+B-1})|\cF_m] 
\\&- \max_{a,b}\{\mathbb{E}[\phi(\xi_{a:a+B-1},\y_{b:b+B-1})|\cF_m]-\phi(\xi_{a:a+B-1},\y_{b:b+B-1})\}
\\ =& \max_{a,b}\mathbb{E}[\phi(\xi_{a:a+B-1},\y_{b:b+B-1})|\cF_m] 
+O(C_{N,B}).
\end{align*}
On the other hand, for a randomly generated key $\xi'_{1:n}$, $\mathbb{E}[\phi(\xi_{a:a+B-1}',\y_{b:b+B-1})|\cF_m]=0$ for all $a,b$. By the same argument, we get
\begin{align*}
P\left(\phi(\xi_{1:n}',\y_{1:m})>t|\cF_m\right)=&P\left(\max_{1\leq a\leq n-B+1,1\leq b\leq m-B+1}\phi(\xi_{a:a+B-1}',\y_{b:b+B-1})>t\Big|\cF_m\right)
\\ \leq&  2(n-B+1)(m-B+1)\exp\left(-8Bt^2\right),    
\end{align*}
which suggests that $F^{-1}(s)=O(C_{N,B})$ with $F$ being 
the distribution of $\phi(\xi_{1:n}', \y_{1:m})$ conditional on $\cF_m$ and $F^{-1}(t)=\inf\{s:F(s)\geq t\}$. The rest of the arguments are similar to those in the proof of Theorem \ref{thm:error-control}. We skip the details.
\end{proof}

\begin{proof}[Proof of Theorem~\ref{thm:its}]
Recall in ITS setting that the test statistic is given by:
\[\phi(\xi_{1:n}, y_{1:n}) = \frac{1}{n}\sum_{i=1}^n w_i (u_i - 1 / 2)\left(\frac{\pi_i(y_i) - 1}{V - 1} - \frac{1}{2}\right) \coloneqq \frac{1}{n}\sum_{i = 1}^n w_i h_i(\xi_i, y_i).\]
Because $|h_i|\leq 1/4$, by Hoeffding's inequality, we have
\begin{align*}
&P\left(|\phi(\xi_{a:a+B-1},\y_{b:b+B-1})-\mathbb{E}[\phi(\xi_{a:a+B-1},\y_{b:b+B-1})|\y_{1:m},y_{-n_0:n}]|>t|\y_{1:m},y_{-n_0:n}\right)
\\ \leq&  2\exp\left(-8Bt^2 / \Omega_{\max}^2\right). 
\end{align*}
By the union bound, we have 
\begin{align*}
&P\Bigg(\max_{1\leq a\leq n-B+1,1\leq b\leq m-B+1}|\phi(\xi_{a:a+B-1},\y_{b:b+B-1})
\\&-\mathbb{E}[\phi(\xi_{a:a+B-1},\y_{b:b+B-1})|\y_{1:m},y_{-n_0:n}]|
>t|\y_{1:m},y_{-n_0:n}\Bigg)
\\ \leq&  2(n-B+1)(m-B+1)\exp\left(-8Bt^2 / \Omega_{\max}^2\right).  
\end{align*}
Integrating out the strings in $y_{1:n}$ that are not contained in $\y_{1:m}$, we obtain
\begin{align*}
&P\Bigg(\max_{1\leq a\leq n-B+1,1\leq b\leq m-B+1}|\phi(\xi_{a:a+B-1},\y_{b:b+B-1})
\\&-\mathbb{E}[\phi(\xi_{a:a+B-1},\y_{b:b+B-1})|\y_{1:m},y_{-n_0:n}]|>t|\cF_m\Bigg)
\\ \leq&  2(n-B+1)(m-B+1)\exp\left(-8Bt^2 / \Omega_{\max}^2\right),
\end{align*}
where $\cF_m = [\y_{1:m},y_{-n_0:0}]$. Thus, conditional on $\cF_m$, we have
\[\max_{a,b}\{|\mathbb{E}[\phi(\xi_{a:a+B-1},\y_{b:b+B-1})| \cF_m]-\phi(\xi_{a:a+B-1},\y_{b:b+B-1})|\}=O(C_{N,B}).\]
Note that 
\begin{align*}
\phi(\xi_{1:n},\y_{1:m})=&\max_{a,b}\phi(\xi_{a:a+B-1},\y_{b:b+B-1})
\\ \geq& \max_{a,b}\mathbb{E}[\phi(\xi_{a:a+B-1},\y_{b:b+B-1})|\cF_m] 
\\&- \max_{a,b}\{\mathbb{E}[\phi(\xi_{a:a+B-1},\y_{b:b+B-1})|\cF_m]-\phi(\xi_{a:a+B-1},\y_{b:b+B-1})\}
\\ =& \max_{a,b}\mathbb{E}[\phi(\xi_{a:a+B-1},\y_{b:b+B-1})|\cF_m] 
+O(C_{N,B}).
\end{align*}
On the other hand, for a randomly generated key $\xi'_{1:n}$, $\mathbb{E}[\phi(\xi_{a:a+B-1}',\y_{b:b+B-1})|\cF_m]=0$ for all $a,b$. By the same argument, we get
\begin{align*}
P\left(\phi(\xi_{1:n}',\y_{1:m})>t|\cF_m\right)=&P\left(\max_{1\leq a\leq n-B+1,1\leq b\leq m-B+1}\phi(\xi_{a:a+B-1}',\y_{b:b+B-1})>t\Big|\cF_m\right)
\\ \leq&  2(n-B+1)(m-B+1)\exp\left(-8Bt^2 / \Omega_{\max}^2\right),    
\end{align*}
which suggests that $F^{-1}(s)=O(C_{N,B})$ with $F$ being 
the distribution of $\phi(\xi_{1:n}', \y_{1:m})$ conditional on $\cF_m$ and $F^{-1}(t)=\inf\{s:F(s)\geq t\}$. The rest of the arguments are similar to those in the proof of Theorem \ref{thm:error-control}. We skip the details.
\end{proof}

\begin{proof}[Proof of Corollary \ref{cor:thm-exp}]
~\\
(i) In Example~\ref{ex1}, the test statistic is given by
\[\phi(\xi_{1:n}, y_{1:n}) = \frac{1}{n}\sum_{i=1}^n\left\{\log(\xi_{i, y_i}) + 1\right\}.\]
Observe that $E_{ik} \coloneqq -\log(\xi_{ik}) / p(k|y_{-n_0:i-1})\sim\text{Exp}(p(k|y_{-n_0:i-1}))$. Since $\xi'_{1:n}$ is independent of $y_{1:n}$,  we have
$-\log(\xi'_{i, y_i})\lvert y_{-n_0:n}\sim\text{Exp}(1)$. Hence, conditional on $y_{-n_0:n}$, we have
\[\bbE[\phi(\xi'_{1:n}, y_{1:n})|y_{-n_0:n}] = 0, \quad \Var(\phi(\xi'_{1:n}, y_{1:n})|y_{-n_0:n}) = \frac{1}{n}.\]

Given $y_{-n_0:n}$, we know $E_{i, y_i} = \min_{1\leq k \leq V}E_{ik}$, which implies $-\log(\xi_{i, y_i}) / p(y_i|y_{-n_0:i-1}) \lvert y_{-n_0:n} \sim \text{Exp}(1)$. It is worth noting that
\begin{align*}
P(-\log(\xi_{i, y_i}) \geq t) = p\left(-\frac{\log(\xi_{i, y_i})}{p(y_i|y_{-n_0:i-1})} \geq \frac{t}{p(y_i|y_{-n_0:i-1})}\right) =\exp\left(-\frac{t}{p(y_i|y_{-n_0:i-1})}\right).
\end{align*}
That is  $-\log(\xi_{i, y_i})\lvert y_{-n_0:n}\sim\text{Exp}(1/p(y_i|y_{-n_0:i-1}))$. Since $\xi_i$s are conditionally independent given $y_{-n_0:n}$. Thus, we have
\begin{align*}
&\bbE[\phi(\xi_{1:n}, y_{1:n})|y_{-n_0:n}] = \frac{1}{n}\sum_{i=1}^n \left(1 - p(y_i|y_{-n_0:i-1}) \right),\\ 
&\Var(\phi(\xi_{1:n}, y_{1:n})|y_{-n_0:n}) = \frac{1}{n^2}\sum_{i=1}^n p(y_i|y_{-n_0:i-1})^2 \leq \frac{1}{n}.
\end{align*}
Therefore, conditions of Theorem~\ref{thm:error-control} are satisfied.

(ii) Recall in Example~\ref{ex2} that the test statistic is given by:
\[\phi(\xi_{1:n}, y_{1:n}) = \frac{1}{n}\sum_{i=1}^n(u_i - 1 / 2)\left(\frac{\pi_i(y_i) - 1}{V - 1} - \frac{1}{2}\right) \coloneqq \frac{1}{n}\sum_{i = 1}^n h_i(\xi_i, y_i),\]
where $\xi_i = (u_i, \pi_i)$. Notice that $h_i(\xi_i, y_i)$ is bounded. Since $\xi_{1:n}'$ is independent of $y_{-n_0:n}$, we have
$u_i'|y_{1:n}\sim\text{Unif}[0, 1]$. Thus, $\bbE[\phi(\xi'_{1:n}, y_{1:n})\lvert y_{1:n}] = 0$.

Denote $\mu_i(\cdot) = p(\cdot \lvert y_{-n_0:i-1})$. Conditional on $y_{-n_0:i}$ and $\pi_i(y_i)$, we know that $u_i$ follows the uniform distribution over the interval 
$[\mu_i(y: \pi_i(y) < \pi_i(y_i)), \mu_i(y: \pi_i(y) \leq \pi_i(y_i))]$. As a result, we can calculate the expected value of $u_i$ given $y_{-n_0:i}$ and $\pi_i(y_i)$ as
\begin{align*}
\bbE[u_i\lvert y_{-n_0:i}, \pi_i(y_i)] &= \frac{1}{2}\left\{\mu_i(y:\pi_i(y) < \pi_i(y_i)) + \mu_i(y:\pi_i(y) \leq \pi_i(y_i))\right\}\\
&= \frac{\mu_i(y_i)}{2} + \mu_i(y:\pi_i(y) < \pi_i(y_i))\\
&= \frac{\mu_i(y_i)}{2} + \frac{\pi_i(y_i) - 1}{V - 1}(1 - \mu_i(y_i))\\
&= \frac{1}{2} + (1 - \mu_i(y_i))\left(\frac{\pi_i(y_i) - 1}{V - 1} - \frac{1}{2}\right),
\end{align*}
where the third equality is because given $\pi_i(y_i) = k$, $\pi_i$ follows the uniform distribution over the permutation space with the restriction $\pi_i(y_i) = k$. Then, we have
\begin{align*}
&\bbE\left[(u_i - 1/2)\left(\frac{\pi_i(y_i) - 1}{V - 1} - \frac{1}{2}\right)\lvert y_{-n_0:i}\right] \\
=& \bbE\left[\bbE\left[(u_i - 1/2)\left(\frac{\pi_i(y_i) - 1}{V - 1} - \frac{1}{2}\right)\lvert y_{-n_0:i}, \pi_i(y_i)\right]\lvert y_{-n_0:i}\right]\\
=& \bbE\left[(1 - \mu_i(y_i))\left(\frac{\pi_i(y_i) - 1}{V - 1} - \frac{1}{2}\right)^2\right]\\
=& \bigl(1 - p(y_i|y_{-n_0:i - 1})\bigr)\bbE\left[\left(\frac{\pi_i(y_i) - 1}{V - 1} - \frac{1}{2}\right)^2\lvert y_{-n_0:i}\right].
\end{align*}

Given $y_i$, $(\pi_i(y_i)-1)/(V-1)$ follows the uniform distribution over the discrete space $\{0, 1 / (V - 1), \dots, 1\}$. Thus we have $\bbE[(\pi_i(y_i) - 1) / (V - 1)] = 1 / 2$ and $\Var((\pi_i(y_i) - 1) / (V - 1)) = C$, which is a constant less than $1 / 12$ (i.e., the variance of $\text{Unif}[0, 1]$), which further implies that $\bbE[\phi(\xi_{1:n}, y_{1:n})] = C n^{-1}\sum_{i=1}^n\bigl(1 - p(y_i\lvert y_{1:i - 1})\bigr)$. Hence, conditions of Theorem~\ref{thm:error-control} are satisfied due to \eqref{eq:cond-examp}.
\end{proof}

\begin{proof}[Proof of Corollary~\ref{cor:its}]
Recall from Example~\ref{ex2} that the test statistic is given by:
\[\phi(\xi_{1:n}, y_{1:n}) = \frac{1}{n}\sum_{i=1}^n w_i (u_i - 1 / 2)\left(\frac{\pi_i(y_i) - 1}{V - 1} - \frac{1}{2}\right) \coloneqq \frac{1}{n}\sum_{i = 1}^n w_i h_i(\xi_i, y_i),\]
where $\xi_i = (u_i, \pi_i)$. First, note that $h_i(\xi_i, y_i)$ is bounded, with a minimum value of $-1/4$ and a maximum value of $1/4$. Consequently, $\sd_{\xi} \leq \frac{1}{4n}\sqrt{\sum_{i=1}^n w_i^2}$ and $\sd_{\xi'} \leq \frac{1}{4n}\sqrt{\sum_{i=1}^n w_i^2}$. Since $\xi_{1:n}'$ is independent of $y_{-n_0:n}$, it follows that $u_i' \mid y_{1:n} \sim \text{Unif}[0, 1]$, and hence $\sE_{\xi'} = 0$.

Define $\mu_i(\cdot) = p(\cdot \mid y_{-n_0:i-1})$. Conditional on $y_{-n_0:i}$ and $\pi_i(y_i)$, $u_i$ is uniformly distributed over the interval 
$[\mu_i(y: \pi_i(y) < \pi_i(y_i)), \mu_i(y: \pi_i(y) \leq \pi_i(y_i))]$. Therefore, the expected value of $u_i$ given $y_{-n_0:i}$ and $\pi_i(y_i)$ is:
\begin{align*}
\bbE[u_i \mid y_{-n_0:i}, \pi_i(y_i)] &= \frac{1}{2}\left\{\mu_i(y:\pi_i(y) < \pi_i(y_i)) + \mu_i(y:\pi_i(y) \leq \pi_i(y_i))\right\} \\
&= \frac{\mu_i(y_i)}{2} + \mu_i(y:\pi_i(y) < \pi_i(y_i)) \\
&= \frac{\mu_i(y_i)}{2} + \frac{\pi_i(y_i) - 1}{V - 1}(1 - \mu_i(y_i)) \\
&= \frac{1}{2} + (1 - \mu_i(y_i))\left(\frac{\pi_i(y_i) - 1}{V - 1} - \frac{1}{2}\right),
\end{align*}
where the third equality holds because, given $\pi_i(y_i) = k$, $\pi_i$ follows a uniform distribution over the permutation space with the restriction $\pi_i(y_i) = k$. Thus, we have:
\begin{align*}
&\bbE\left[w_i (u_i - 1/2)\left(\frac{\pi_i(y_i) - 1}{V - 1} - \frac{1}{2}\right) \mid y_{-n_0:i}\right] \\
=& \bbE\left[w_i \bbE\left[(u_i - 1/2)\left(\frac{\pi_i(y_i) - 1}{V - 1} - \frac{1}{2}\right) \mid y_{-n_0:i}, \pi_i(y_i)\right] \mid y_{-n_0:i}\right] \\
=& \bbE\left[w_i (1 - \mu_i(y_i))\left(\frac{\pi_i(y_i) - 1}{V - 1} - \frac{1}{2}\right)^2 \lvert y_{-n_0:i}\right] \\
=& w_i \bigl(1 - p(y_i \mid y_{-n_0:i - 1})\bigr)\bbE\left[\left(\frac{\pi_i(y_i) - 1}{V - 1} - \frac{1}{2}\right)^2 \mid y_{-n_0:i}\right].
\end{align*}

Given $y_i$, $(\pi_i(y_i) - 1) / (V - 1)$ follows a uniform distribution over the discrete space $\{0, 1 / (V - 1), \dots, 1\}$. Thus, $\bbE[(\pi_i(y_i) - 1) / (V - 1)] = 1 / 2$ and $\Var((\pi_i(y_i) - 1) / (V - 1)) = C$, where $C$ is a constant less than $1 / 12$ (the variance of $\text{Unif}[0, 1]$). Therefore, we have:
\[\sE_{\xi} = C n^{-1}\sum_{i=1}^n w_i \bigl(1 - p(y_i \mid y_{1:i - 1})\bigr).\]
Thus, the conditions of Theorem~\ref{thm:error-control} are satisfied due to \eqref{eq:cond-genestat}.
\end{proof}

\begin{proof}[Proof of Lemma~\ref{lemma:its}]
If $y_i$ is not generated from $\xi_i$, then $u_i$ is independent of $y_i$, and $u_i \sim \text{Unif}[0, 1]$. On the other hand, if $y_i$ is generated from $\xi_i$, then $u_i \in \mathcal{I}_i$, implying that $u_i \sim \text{Unif}[\mathcal{I}_i]$.
\end{proof}

\newpage

\section{Additional numerical results for the EMS method}\label{sec:additional-numerical-results}
Algorithm~\ref{algorithm:seedbs-not} is the SeedBS-NOT algorithm for change point detection in partially watermarked texts.
\begin{algorithm}[tbh]
\linespread{1.0}\selectfont
  \caption{SeedBS-NOT for change point detection in partially watermarked texts}
  \label{algorithm:seedbs-not}
  \begin{algorithmic}
  \Require Sequence of $p$-values $\{p_i\}_{i=1}^m$, decay parameter $a \in [1/2, 1)$ for SeedBS, threshold $\zeta$ for NOT.
  \Ensure Locations of the change points.
  \State Define $I_1 = (0, m]$.  \Comment{Initialization for SeedBS}
  \For{$k \gets 2, \ldots, \lceil \log_{1/a}m \rceil$}
    \State Number of intervals in the $k$-th layer: $n_k = 2 \lceil (1/a)^{k-1} \rceil - 1$.
    \State Length of intervals in the $k$-th layer: $l_k = m a^{k - 1}$.
    \State Shift of intervals in the $k$-th layer: $s_k = (m - l_k) / (n_k - 1)$.
    \State $k$-th layer intervals: $\mathcal{I}_k = \bigcup_{i=1}^{n_k}\{(\lfloor (i-1)s_k \rfloor, \lceil (i-1)s_k + l_k \rceil]\}$.
  \EndFor
  \State Define all seeded intervals $\mathcal{I} = \bigcup_{k = 1}^{\lceil \log_{1/a}m \rceil}\mathcal{I}_k$.  \Comment{End of SeedBS}
  \For{$i \gets 1, \ldots, \lvert \mathcal{I} \rvert$}  \Comment{Start of NOT}
    \State Define the $i$-th interval $I_i = (r_i, s_i]$.
    \State Define $S_{r_i+1:s_i}(\tau):=\sup_{t\in [0,1]}\frac{(\tau - r_i)(s_i - \tau)}{(s_i-r_i)^{3/2}}|F_{r_i+1:\tau}(t)-F_{\tau+1:s_i}(t)|.$
    \State Let $\hat{\tau}_i = \argmax_{r_i < \tau \le s_i} S_{r_i+1:s_i}(\tau)$.
    \State Obtain $\tilde{p}_i$ using block bootstrap as defined in \eqref{ptildestatistic}.
  \EndFor
  \State Define the set of potential change point locations $\mathcal{O} = \{i: \tilde{p}_i < \zeta\}$ and initialize the final set of change points as $\mathcal{S} = \emptyset$.
  \While{$\mathcal{O} \not= \emptyset$}
    \State Select $i = \argmin_{i = 1, \ldots, \lvert \mathcal{O} \rvert} \{ \lvert I_i \rvert \} = \argmin_{i = 1, \ldots, \lvert \mathcal{O} \rvert} \{s_i - r_i\}$.
    \State Update $\mathcal{S} \gets \mathcal{S} \cup \{ \hat{\tau}_i \}; \quad \mathcal{O} \gets \{ j \le \lvert \mathcal{O} \rvert: \hat{\tau}_i \not\in I_j \}$.
  \EndWhile
  \State \Return $\mathcal{S}$.  \Comment{End of NOT}
  \end{algorithmic}
\end{algorithm}

\subsection{Additional Numerical Results for the EMS Method Using the \texttt{Llama} Model}
Figure~\ref{fig:ml3-EMS-pvalue} shows the $p$-value sequence obtained for the first 10 prompts using the \texttt{Llama} model with watermarks generated by the EMS method.

\begin{figure}
    \centering
    \includegraphics[width=\linewidth]{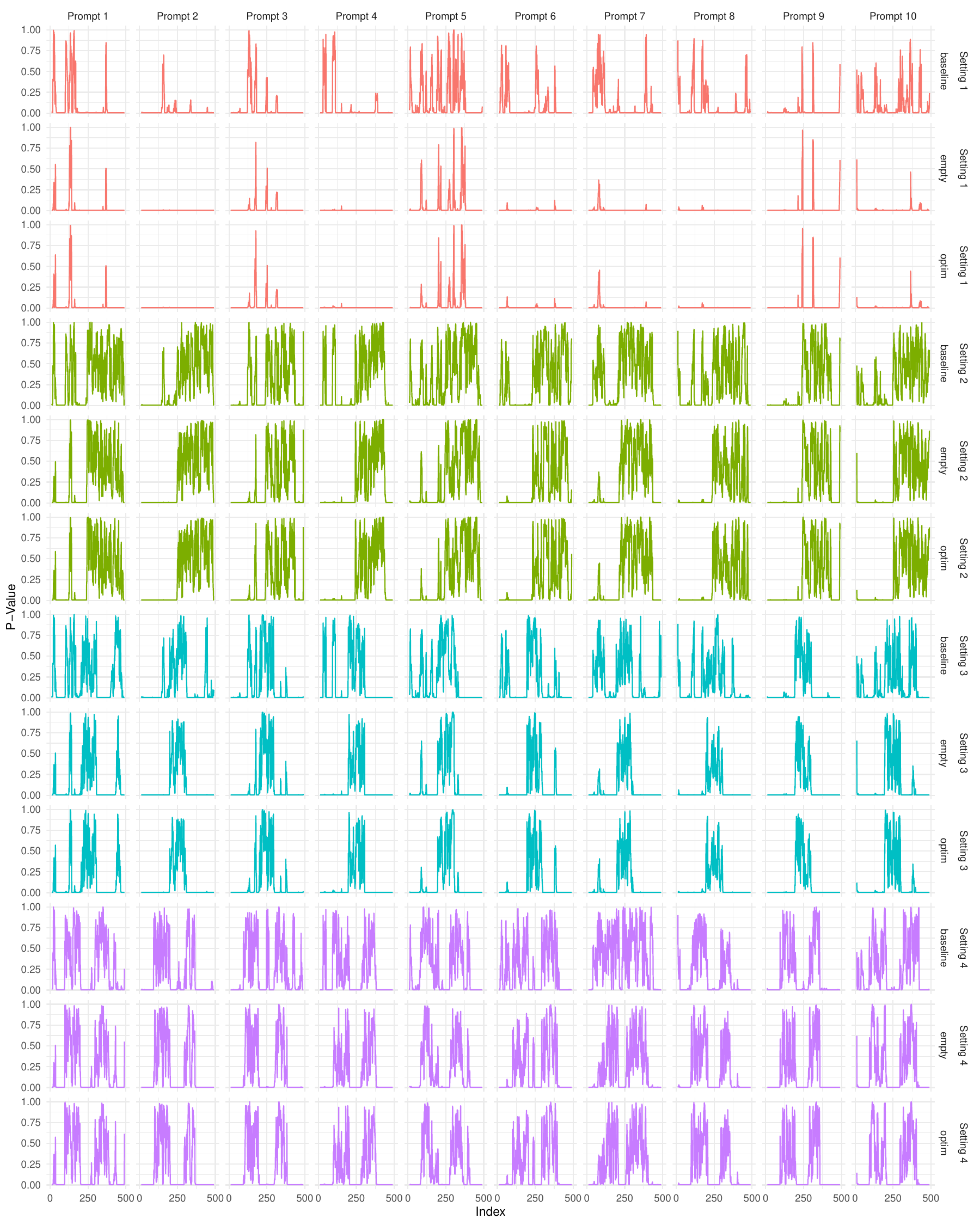}
    \caption{$p$-value sequence calculated using watermarked texts generated from the \texttt{Llama} LLM with the EMS method.}
    \label{fig:ml3-EMS-pvalue}
\end{figure}

\subsection{Additional Numerical Results for the EMS Method Using the \texttt{Mistral} Model}
Figure~\ref{fig:mt7-EMS-rand_index} shows the Rand index for different thresholds in the EMS framework using the \texttt{Mistral} LLM. Finally, Figure~\ref{fig:mt7-EMS-pvalue} presents the $p$-value sequences obtained for the first 10 prompts using the \texttt{Llama} model with watermarks generated by the EMS method.

\begin{figure}
    \centering
    \includegraphics[width=\linewidth]{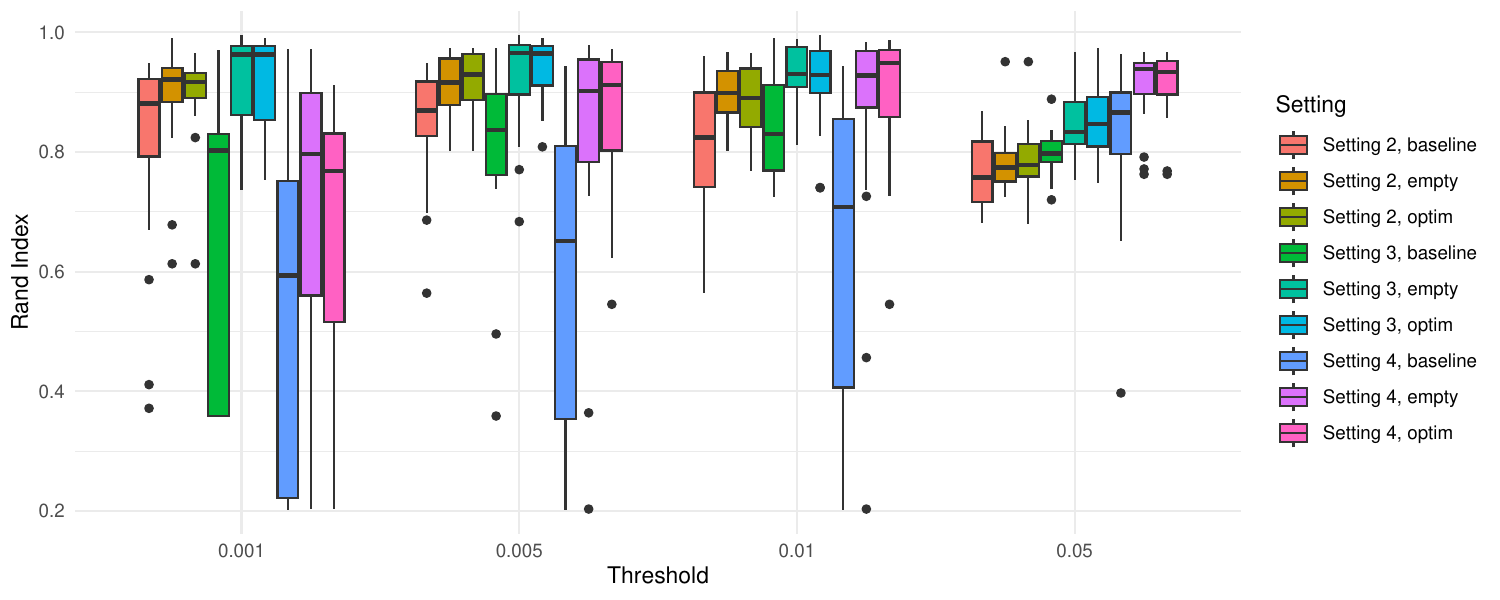}
    \caption{Boxplots of the Rand index comparing clusters identified through detected change points with the true clusters defined by the true change points, for different thresholds in the EMS framework using the \texttt{Mistral} LLM.} \label{fig:mt7-EMS-rand_index}
\end{figure}

\begin{figure}
    \centering
    \includegraphics[width=\linewidth]{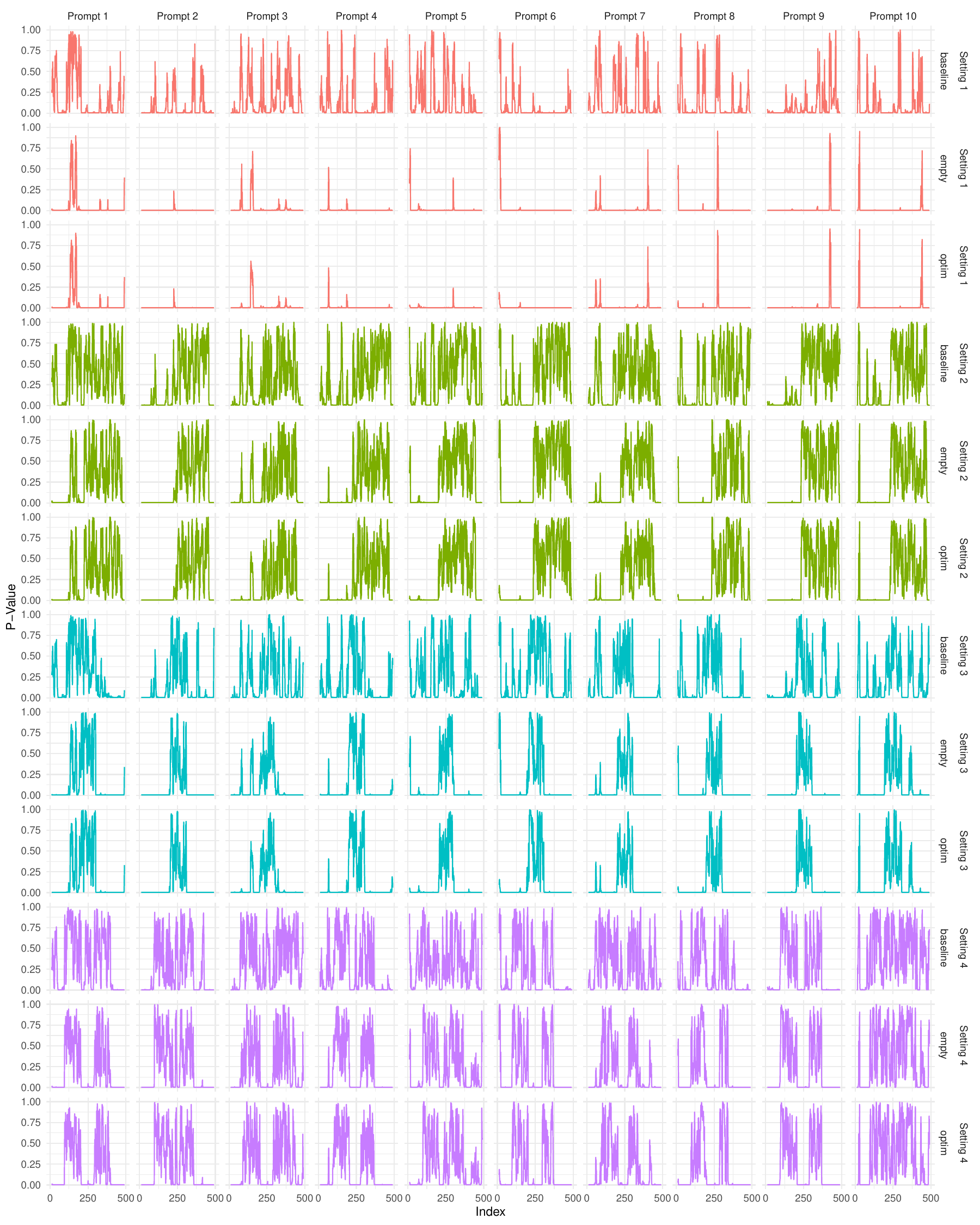}
    \caption{$p$-value sequence calculated using watermarked texts generated from the \texttt{Mistral} LLM with the EMS method.}
    \label{fig:mt7-EMS-pvalue}
\end{figure}
In all settings, the proposed adaptive method outperforms the \texttt{baseline} method, demonstrating the advantages of the proposed approach.

\subsection{Illustration of False Detections}
Figure~\ref{fig:ml3-EMS-false_positives} illustrates the number of false detections for different thresholds. In all settings, the \texttt{oracle} method consistently achieves the lowest false positive rate, followed by the \texttt{optim} method. The \texttt{empty} method performs similarly to the \texttt{optim} method, and all adaptive methods outperform the \texttt{baseline} method. 

\begin{figure}
    \centering
    \includegraphics[width=0.8\linewidth]{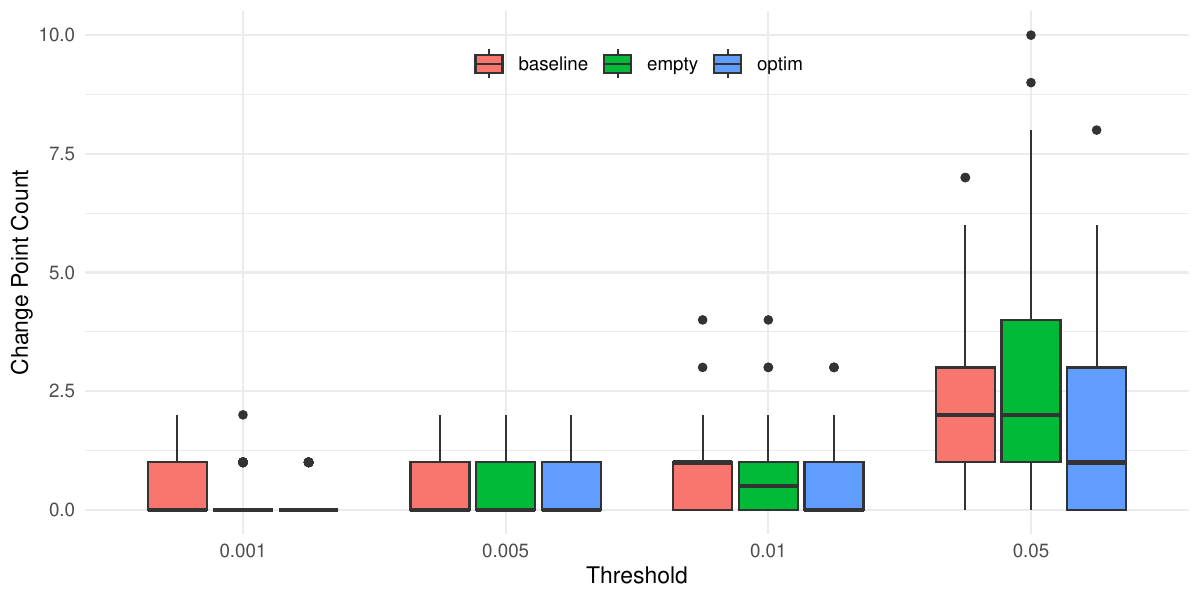}
    \caption{Boxplots of the number of false detections for different thresholds in the EMS framework using the \texttt{Llama} LLM.}
    \label{fig:ml3-EMS-false_positives}
\end{figure}

\subsection{Comparison with Other Test Statistics}
In \citet{kuditipudi2023robust}, the authors proposed using $-\log(1 - \xi_{i, \y_i})$ instead of the test statistic in~\eqref{eq:lrt-stat1}, showing improved performance in watermark detection. We present results based on this statistic in Figures~\ref{fig:ml3-EMS-rand_index-new} and~\ref{fig:mt7-EMS-rand_index-new}, but in our sub-string identification problem, we observed no significant performance differences between the two test statistics.

\begin{figure}
    \centering
    \includegraphics[width=\linewidth]{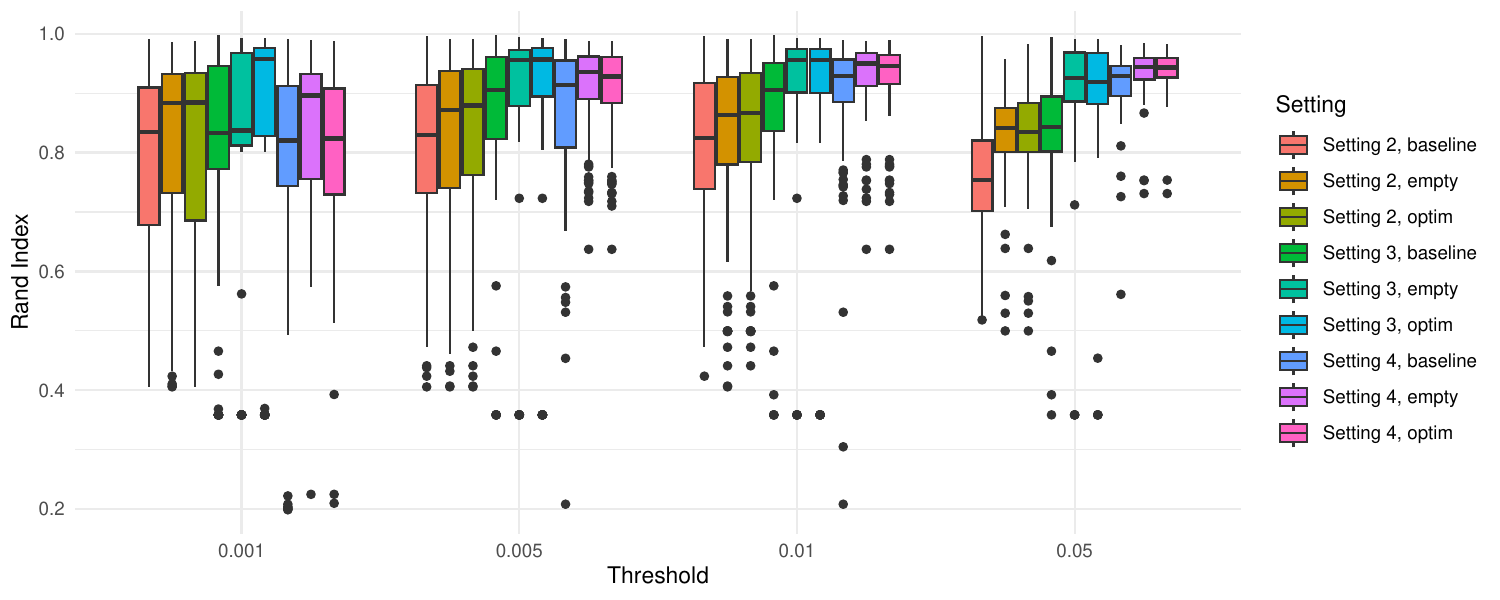}
    \caption{Boxplots of the Rand index comparing clusters identified through detected change points with the true clusters defined by the true change points, for different thresholds in the EMS framework using the \texttt{Llama} LLM with test statistic proposed in \citet{kuditipudi2023robust}.} \label{fig:ml3-EMS-rand_index-new}
\end{figure}

\begin{figure}
    \centering
    \includegraphics[width=\linewidth]{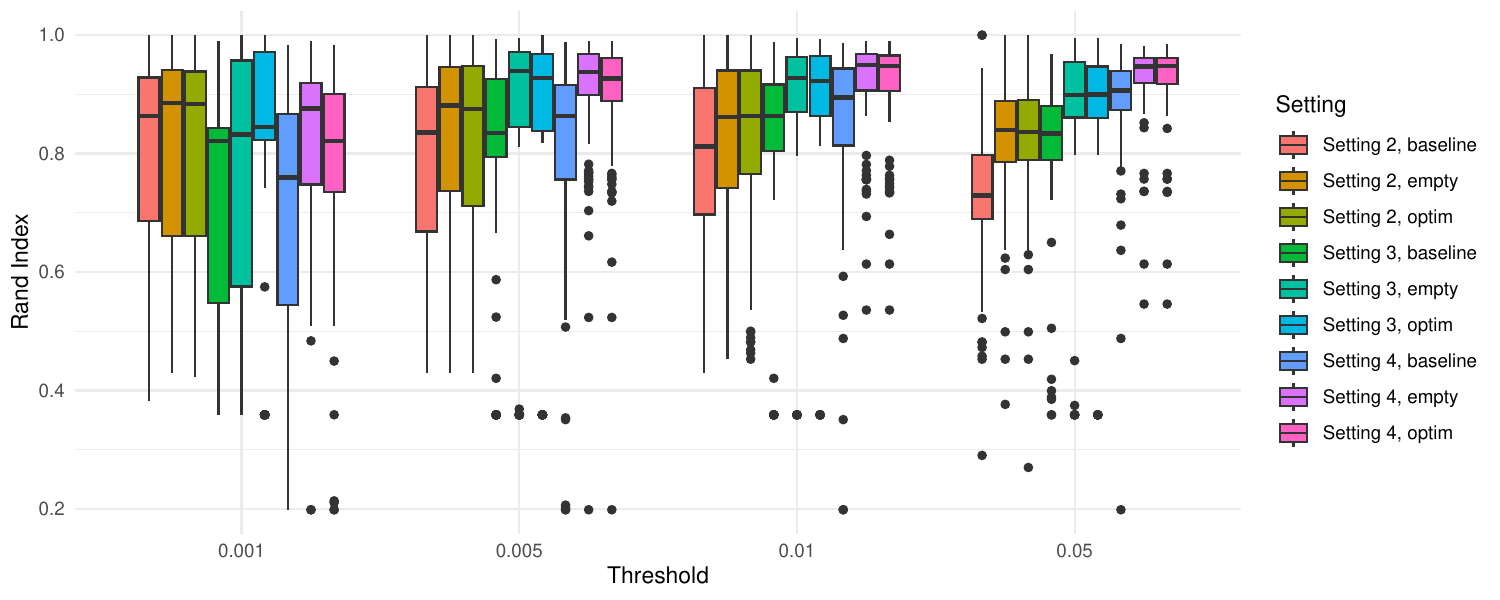}
    \caption{Boxplots of the Rand index comparing clusters identified through detected change points with the true clusters defined by the true change points, for different thresholds in the EMS framework using the \texttt{Mistral} LLM with test statistic proposed in \citet{kuditipudi2023robust}.} \label{fig:mt7-EMS-rand_index-new}
\end{figure}

\subsection{Comparison with Other Methods}
We also compare our method with several state-of-the-art methods, including \citet{xie2025debiasing}, \citet{li2024robust}, \citet{kashtan2023information}, and test statistics proposed in \citet{Aaronson2023}. Since most of these methods are designed for watermark detection rather than segmentation, we first describe below how we incorporated them into our framework.

\begin{enumerate}
    \item \citet{xie2025debiasing} propose using the Maximal Coupling method to debias the ``red–green list'' approach. In our framework, we apply a sliding window (as in our procedure for generating the $p$-value sequence) to obtain a sequence of test $p$-values. We then apply change-point detection to this sequence and evaluate performance using the Rand index. We refer to this method as \texttt{Coupling}.

    \item \citet{li2024robust} develop a mixture-model approach for watermark detection. In our setting, we again use a sliding window to generate a sequence of test statistics, apply change-point detection to this sequence, and compute the Rand index. We denote this method by \texttt{Tr-GoF}.

    \item \citet{kashtan2023information} propose a method that identifies sentences most likely written by humans, where the sentence boundaries directly correspond to change points in our framework. We therefore use their reported change points to compute the Rand index, referring to this method as \texttt{IT-LPPT}.
\end{enumerate}

Figure~\ref{fig:rand-comp} compares our method with three alternatives above. Here, \texttt{ours} denotes our method with an empty prompt, which is feasible in real applications. Across all settings, our method consistently outperforms the alternatives. In the first two settings, \texttt{IT-LPPT} performs better than both \texttt{Tr-GoF} and \texttt{Coupling}, whereas in the third setting, \texttt{Tr-GoF} and \texttt{Coupling} outperform \texttt{IT-LPPT}. The relatively weaker performance of \texttt{IT-LPPT} stems from its reliance on sentence boundaries, which do not align with the change-point structure in our scenarios. Meanwhile, \texttt{Tr-GoF} and \texttt{Coupling} were originally designed for watermark detection rather than segmentation, and thus require further adaptation for segmentation tasks rather than direct incorporation into our framework.

\begin{figure}
    \centering
    \includegraphics[width=0.95\linewidth]{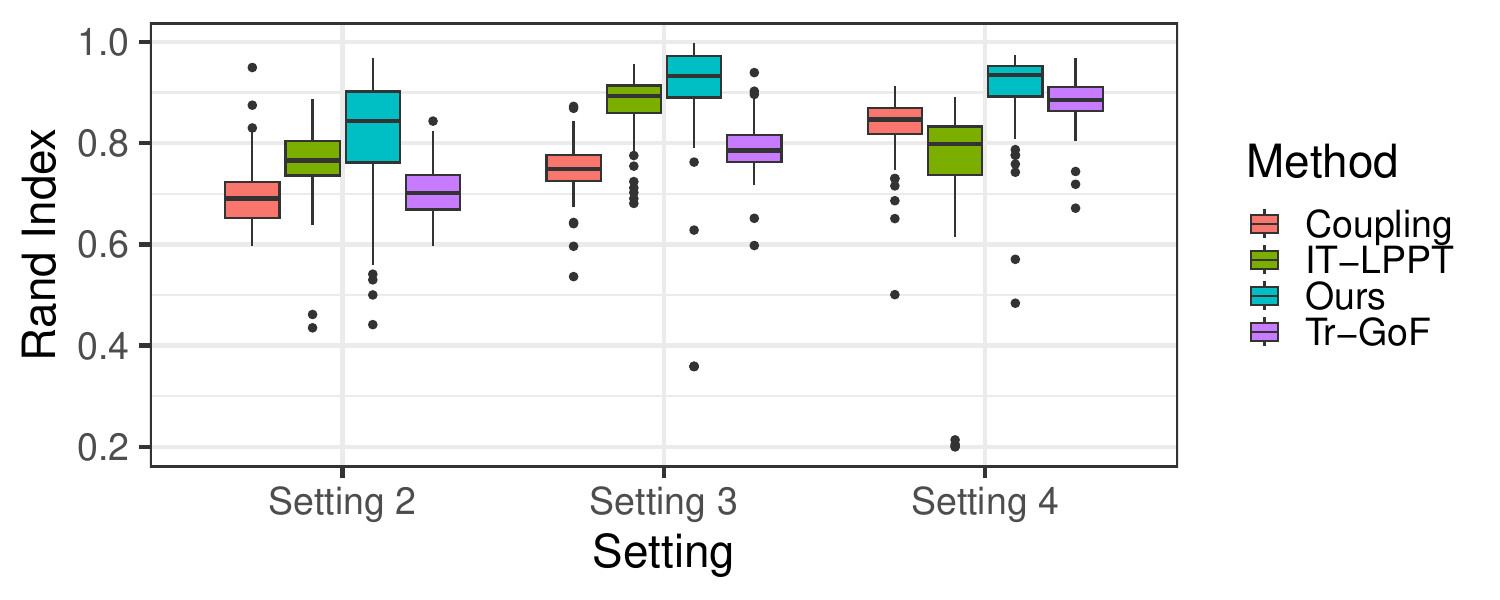}
    \caption{Comparison of four methods based on the Rand index.}
    \label{fig:rand-comp}
\end{figure}

\newpage
\section{Numerical Results for the ITS Method}\label{sec:numerical-results-ITS}
\subsection{Numerical Results for the ITS Method using the \texttt{Llama} Model}
Figure~\ref{fig:ml3-ITS-rand_index} shows the Rand index for different thresholds in the ITS framework using the \texttt{Llama} LLM. Finally, Figure~\ref{fig:ml3-ITS-pvalue} presents the $p$-value sequences obtained for the first 10 prompts using the \texttt{Llama} model with watermarks generated by the IS method.

\begin{figure}
    \centering
    \includegraphics[width=\linewidth]{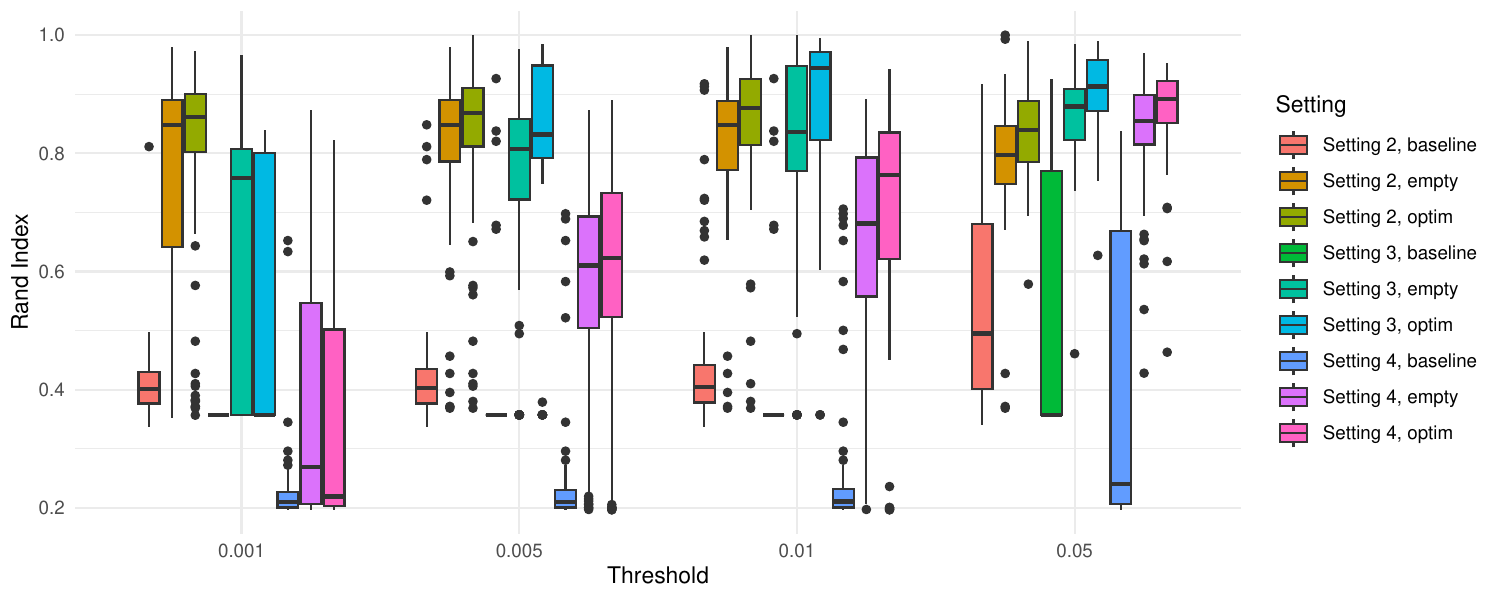}
    \caption{Boxplots of the Rand index for different thresholds in the ITS framework using the \texttt{Llama} LLM.}
    \label{fig:ml3-ITS-rand_index}
\end{figure}

\begin{figure}
    \centering
    \includegraphics[width=\linewidth]{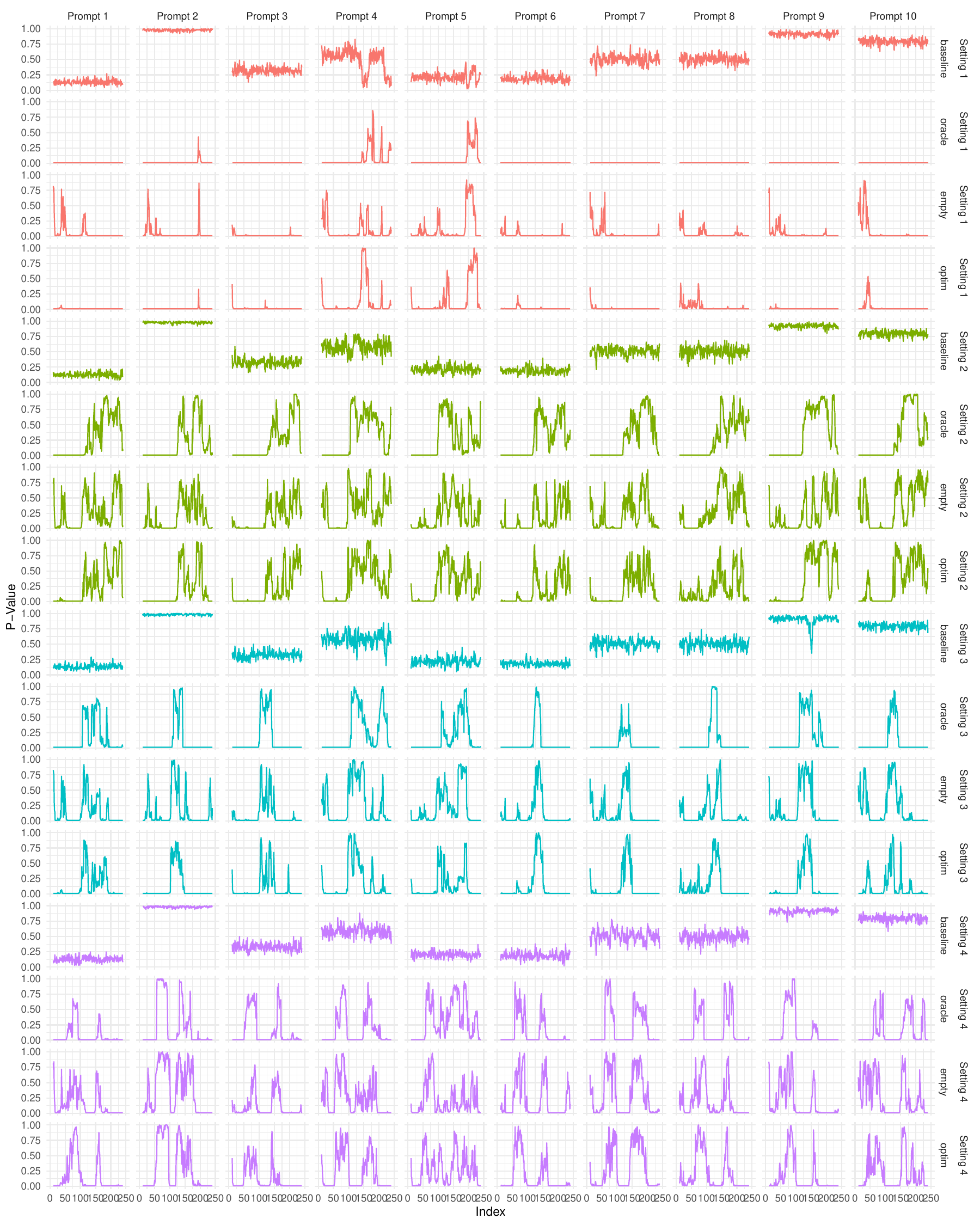}
    \caption{$p$-value sequence calculated using watermarked texts generated from the \texttt{Llama} LLM with the ITS method.}
    \label{fig:ml3-ITS-pvalue}
\end{figure}

In all settings, the proposed adaptive method outperforms the \texttt{baseline} method, demonstrating the advantages of the proposed approach.

\subsection{Numerical Results for the ITS Method using the \texttt{Mistral} Model}
Figure~\ref{fig:mt7-ITS-rand_index} shows the Rand index for different thresholds in the ITS framework using the \texttt{Mistral} LLM. Finally, Figure~\ref{fig:mt7-ITS-pvalue} presents the $p$-value sequences obtained for the first 10 prompts using the \texttt{Mistral} model with watermarks generated by the IS method.

\begin{figure}
    \centering
    \includegraphics[width=\linewidth]{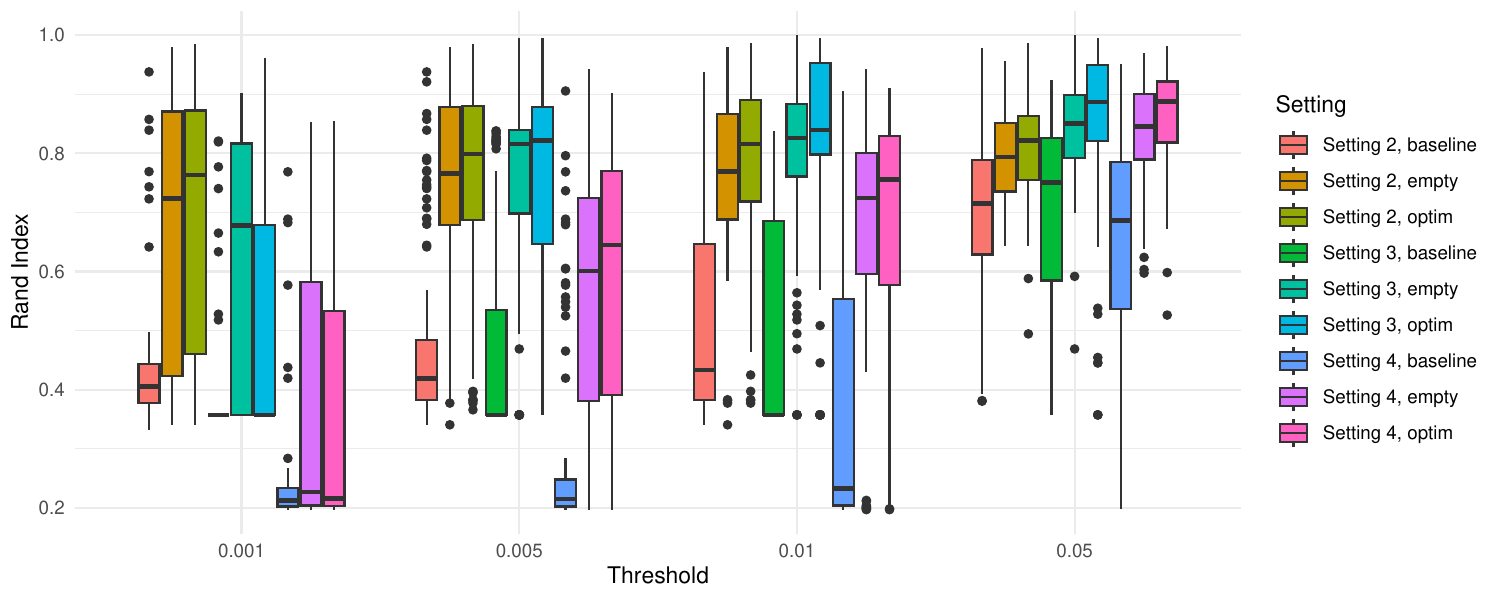}
    \caption{Boxplots of the Rand index for different thresholds in the ITS framework using the \texttt{Mistral} LLM.}
    \label{fig:mt7-ITS-rand_index}
\end{figure}

\begin{figure}
    \centering
    \includegraphics[width=\linewidth]{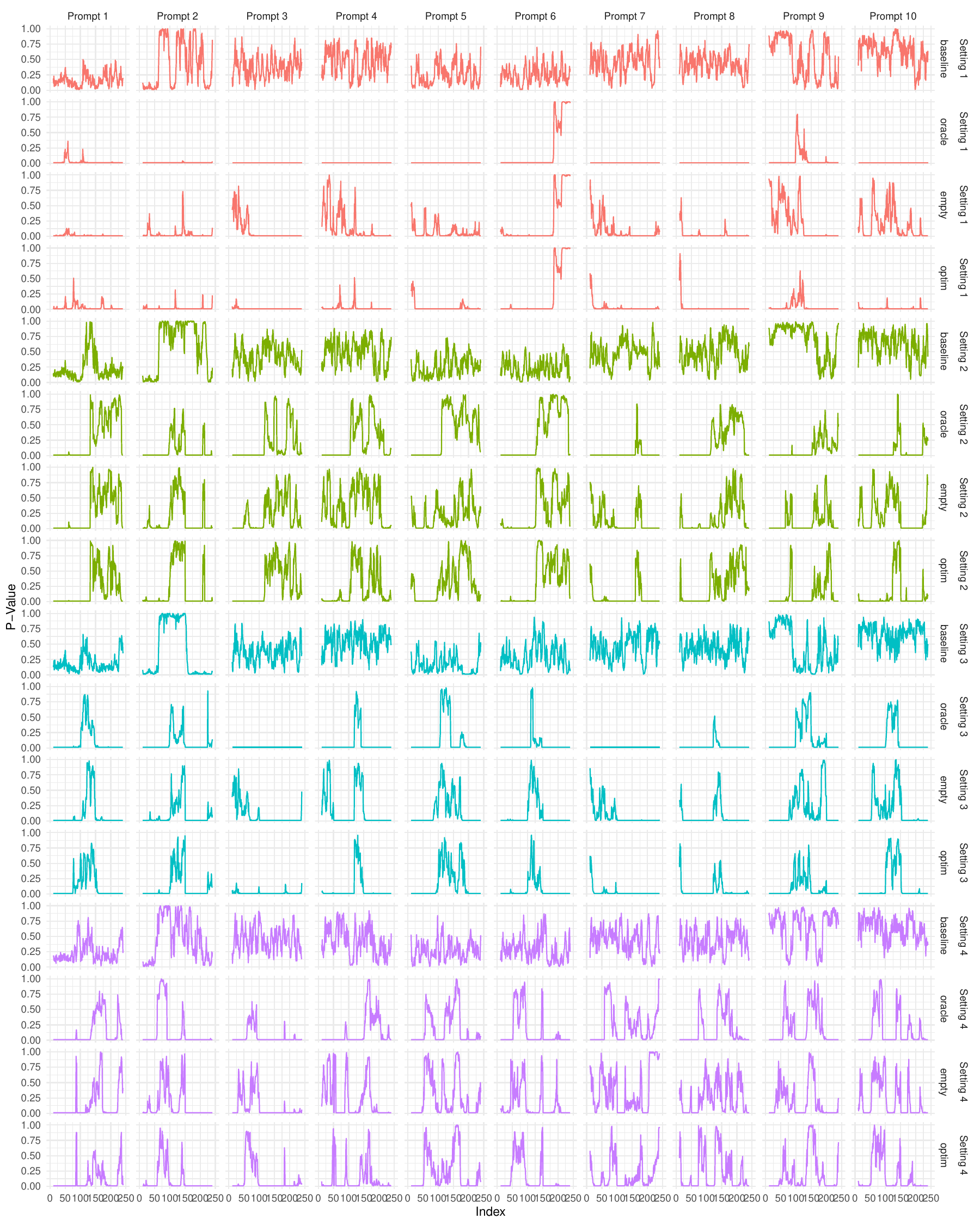}
    \caption{$p$-value sequence calculated using watermarked texts generated from the \texttt{Mistral} LLM with the ITS method.}
    \label{fig:mt7-ITS-pvalue}
\end{figure}

In all settings, the proposed adaptive method outperforms the \texttt{baseline} method, demonstrating the advantages of the proposed approach.

\newpage
\section{Choice of Hyperparameters}
First, to study the impact of the window size $B$, we fix $T=999$, $B' = 20$ and vary the value of $B$ in the set $\{10, 20, 30, 40, 50\}$. Table~\ref{tab:diff_B} shows the rand index value for each setting, with a higher rand index indicating better performance.

It is crucial to select an appropriate value for $B$. If $B$ is too small, the corresponding window may not contain enough data to reliably detect watermarks, as longer strings generally make the watermark more detectable. Conversely, if $B$ is excessively large, it might prematurely shift the detected change point locations, thus reducing the rand index. For instance, let us consider a scenario with 200 tokens where the first 100 tokens are non-watermarked, and the subsequent 100 are watermarked, with the true change point at index 101. Assuming our detection test is highly effective, then it will yield a p-value uniformly distributed over $[0,1]$ over a non-watermarked window and a p-value around zero over a window containing watermarked tokens. When $B = 50$, the window beginning at the 76th token contains one watermarked token, which can lead to a small p-value and thus erroneously indicate a watermark from the 76th token onwards. In contrast, if $B = 20$, the window starting at the 91st token will contain the first watermarked token, leading to a more minor error in identifying the change point location. The above phenomenon is the so-called edge effect, which will diminish as $B$ gets smaller.

The trade-off in the choice of window size is recognized in the time series literature. For instance, a common recommendation for the window size in time series literature is to set $B=Cn^{1/3}$, where $n$ is the sample size, as seen in Corollary 1 of \citep{lahiri1999theoretical}. Based on our experience, setting $B = \lfloor 3n^{1/3} \rfloor$ (for example, when $n=500$, $B=23$) often results in good finite sample performance. A more thorough investigation of the choice of $B$ is deferred to future research.

Our experimental results indicate that setting $B = B'$ is unnecessary. In practice, the sequence of $p$-values is $B$-dependent, with $p_i$ and $p_j$ being independent only if $|i - j| > B$. Consequently, we recommend using $B' = B$ to ensure that the block bootstrap adequately captures this dependence.

\begin{table}
    \centering
    \begin{tabular}{|c|ccccc|}
    \hline
    $B$ & 10 & 20 & 30 & 40 & 50\\
    \hline
    rand index & 0.8808 & 0.9429 &  0.9641 & 0.9570 &0.9243\\
    \hline
    \end{tabular}
    \caption{Results for different choices of $B$ when $T=999$.}
    \label{tab:diff_B}
\end{table}

\newpage
\section{Illustration of the change point detection algorithm}
Here, we present one partially watermarked text generated using the \texttt{Llama} model with the EMS method. The prompt used to generate the text is marked in the upper box, and the generated text is marked below in Figure~\ref{fig:newspaper-example}.

\begin{figure}
    \centering
    \includegraphics[width=\linewidth]{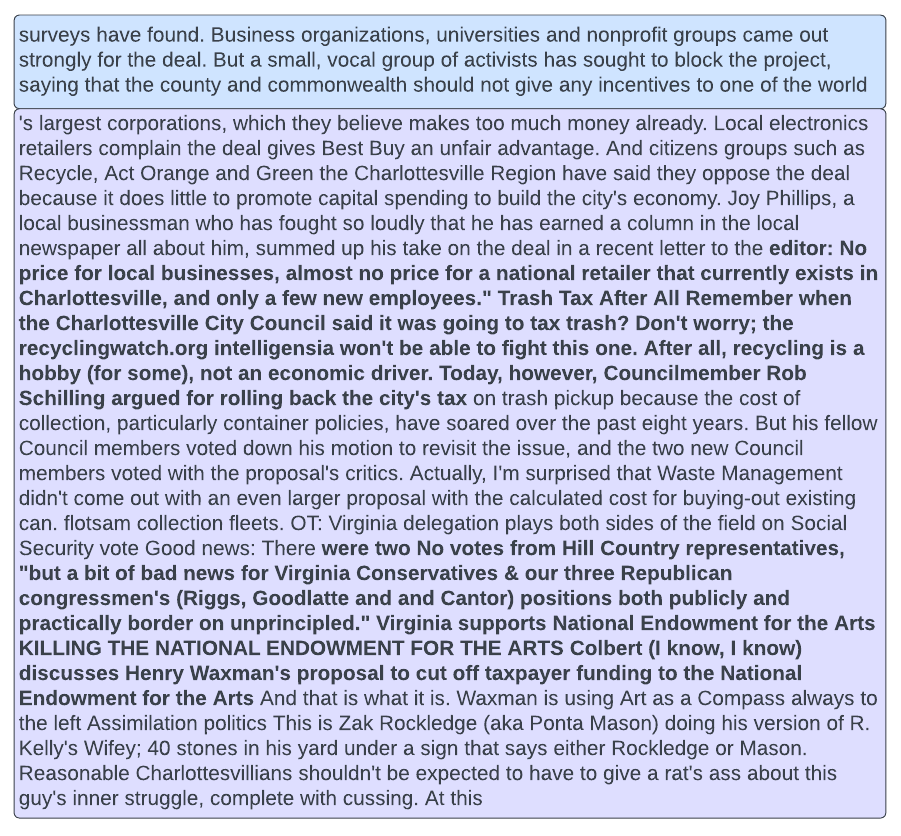}
    \caption{Example prompt and generated text from the \texttt{Llama} model with watermarked texts highlighted with bold font.}
    \label{fig:newspaper-example}
\end{figure}

The change points are located at the tokens: \emph{editor}, \emph{on trash pickup}, \emph{were two No votes} and \emph{And that is what it is}, corresponding to the change points located at $100, 200, 300$ and $400$ after tokenization. The second and fourth segments: ``editor ...... city's tax'' and ``were two No votes ...... for the Arts'' are watermarked with the EMS method. The generated SeedBS intervals can be found in Figure~\ref{fig:seedbs-example}, and the $p$-values sequence can be found in Figure~\ref{fig:newspaper-pvalue-example}. According to Algorithm~\ref{algorithm:seedbs-not}, segments with significant change points for the $p$-value distributions are marked in red in Figure~\ref{fig:not-example} and by NOT, the final change points are marked using red dots, near $100, 200, 300$ and $400$, corresponding to the texts: \emph{a recent letter}, \emph{on trash pickup}, \emph{There were two} and \emph{to cut off}, respectively.

\begin{figure}[ht]
    \centering
    \includegraphics[width=\linewidth]{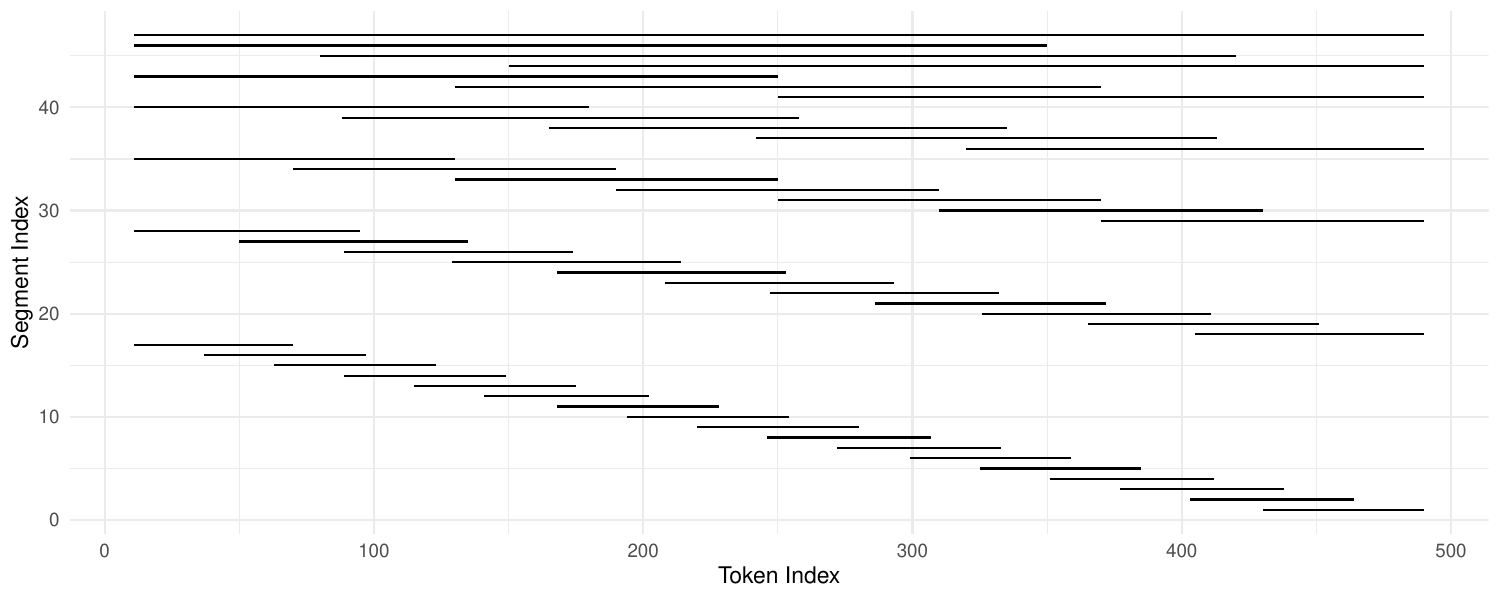}
    \caption{SeedBS intervals in Algorithm~\ref{algorithm:seedbs-not} with $I_1=(0, 500]$ and the decay parameter $a=\sqrt{2}$.}
    \label{fig:seedbs-example}
\end{figure}

\begin{figure}
    \centering
    \includegraphics[width=\linewidth]{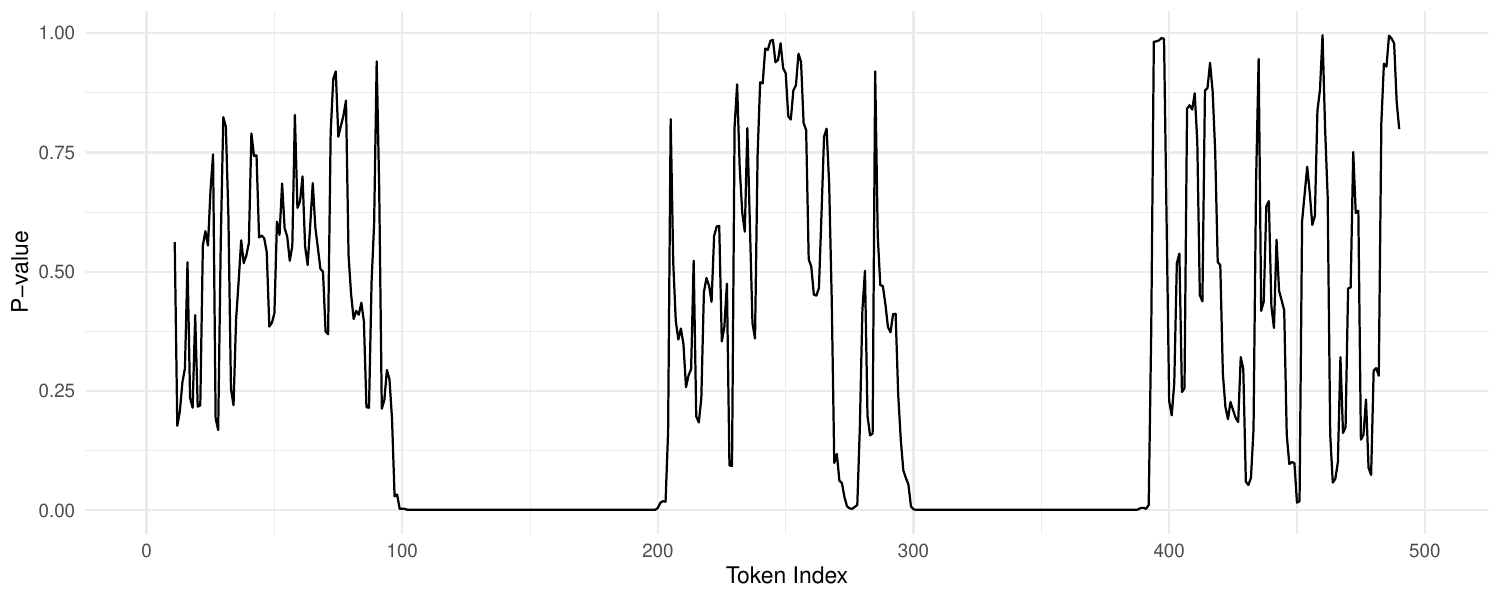}
    \caption{$p$-value sequence from Setting 4 in Section~\ref{sec:segmenting-watermarked-texts} with four change points located at $100, 200, 300$ and $400$. The second and fourth segments are watermarked.}
    \label{fig:newspaper-pvalue-example}
\end{figure}

\begin{figure}
    \centering
    \includegraphics[width=\linewidth]{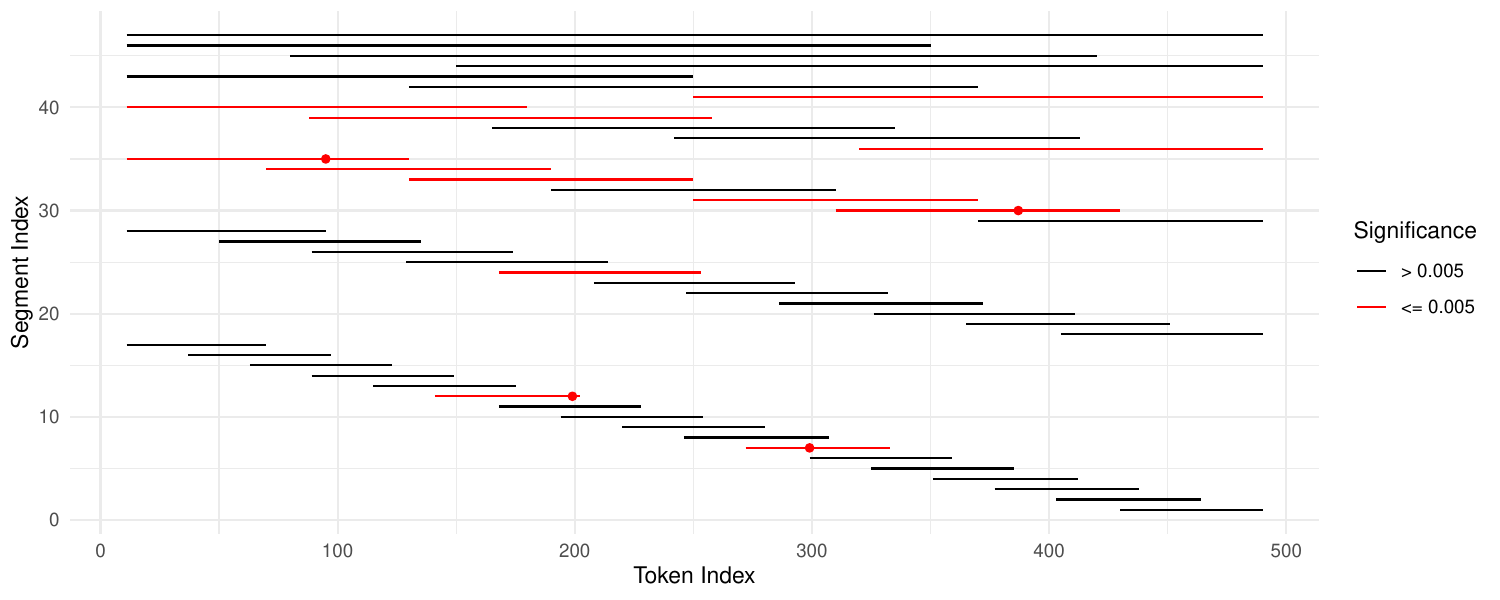}
    \caption{Segments with significant change points are marked in red, and the final estimated change points are marked with red dots. }
    \label{fig:not-example}
\end{figure}

\end{document}